\documentclass[conference,letterpaper]{IEEEtran}
\IEEEoverridecommandlockouts
\usepackage[T1]{fontenc}
\usepackage[utf8]{inputenc}
\usepackage{amsmath,amssymb,amsthm, mathtools}
\usepackage{graphicx}
\usepackage{booktabs}
\usepackage{cite}
\usepackage{enumitem}
\usepackage[letterpaper,left=0.75in,right=0.75in,top=0.75in,
  textheight=672pt,headheight=0pt,headsep=0pt]{geometry}
\usepackage[bookmarks=false,hidelinks]{hyperref}

\usepackage{microtype}
\microtypesetup{protrusion=false}
\renewcommand{\baselinestretch}{0.80}
\makeatletter
\renewcommand{\section}{\@startsection{section}{1}{\z@}%
  {0.6ex plus 0.5ex minus 0.2ex}%
  {0.15ex plus 0.15ex minus 0ex}%
  {\normalfont\normalsize\centering\scshape}}
\renewcommand{\subsection}{\@startsection{subsection}{2}{\z@}%
  {0.5ex plus 0.4ex minus 0.2ex}%
  {0.1ex plus 0.1ex minus 0ex}%
  {\normalfont\normalsize\itshape}}
\makeatother
\setlist{topsep=2pt,itemsep=0pt,parsep=0pt,partopsep=0pt}
\newcommand{\compactdisplayskips}{%
  \setlength{\abovedisplayskip}{2pt plus 1.5pt minus 0.5pt}%
  \setlength{\belowdisplayskip}{2pt plus 1.5pt minus 0.5pt}%
  \setlength{\abovedisplayshortskip}{1pt plus 1pt minus 0.5pt}%
  \setlength{\belowdisplayshortskip}{1pt plus 1pt minus 0.5pt}%
}
\makeatletter
\g@addto@macro\normalsize{\compactdisplayskips}
\def\@IEEENORMtitlevspace{0.75\baselineskip}

\makeatother
\AtBeginDocument{\compactdisplayskips}

\newtheoremstyle{compactplain}
  {2pt plus 1pt minus 1pt}{2pt plus 1pt minus 1pt}
  {\itshape}{}{\bfseries}{.}{0.5em}{}
\newtheoremstyle{compactremark}
  {2pt plus 1pt minus 1pt}{2pt plus 1pt minus 1pt}
  {\normalfont}{}{\itshape}{.}{0.5em}{}
\newtheoremstyle{appendixplain}
  {0.5pt}{0.5pt}
  {\itshape}{}{\bfseries}{.}{0.5em}{}
\theoremstyle{compactplain}
\newtheorem{theorem}{Theorem}
\newtheorem{proposition}[theorem]{Proposition}
\newtheorem{lemma}[theorem]{Lemma}

\theoremstyle{compactremark}

\makeatletter
\renewenvironment{proof}[1][\proofname]{\par
  \pushQED{\qed}%
  \normalfont \topsep2\p@\@plus1\p@\@minus1\p@\relax
  \trivlist
  \item[\hskip\labelsep\itshape #1\@addpunct{.}]\ignorespaces
}{%
  \popQED\endtrivlist\@endpefalse
}
\makeatother

\DeclareRobustCommand{\submissionthanksbegin}{\begingroup
  \def\baselinestretch{1}\fontsize{8}{9}\selectfont}
\DeclareRobustCommand{\submissionthanksend}{\par\endgroup}

\title{\fontsize{16}{20}\selectfont\bfseries
Distributed Harmonic-Covariance Feedback for Hole Closure\\
in Planar Robot Swarms}

\author{Jitian Liu$^{1}$ and Subhrajit Bhattacharya$^{1}$%
\thanks{\submissionthanksbegin
$^{1}$Jitian Liu and Subhrajit Bhattacharya are with the Department of Mechanical
Engineering and Mechanics, Lehigh University, Bethlehem, PA, USA. Emails:
{\ttfamily\fontsize{9}{10}\selectfont \{jila24, sub216\}@lehigh.edu.}%
\submissionthanksend}}

\begin{document}
\begin{NoHyper}

\maketitle


\newcommand{\StateDampingDemoDetails}{For the 15-second demo, the saved frames had $\beta_1=2$ initially, decreasing to 1 by $t=8$~s. The first observed zero occurred at $t=9.9$~s, and every saved frame from that time through the terminal $t=15$~s frame had $\beta_1=0$.}
\newcommand{\StateDampingFigureProvenance}{The snapshot, \v{C}ech, and numerical area panels all use the same accepted state-dependent-damping trajectory.}

%

\begin{abstract}
We consider hole closure in a planar multi-agent system whose sensing disks form a time-varying \v{C}ech complex. We propose a harmonic-covariance feedback that uses harmonic edge signals to locate holes and converts those signals into agent forces. For distributed evaluation, we use a fixed-degree polynomial of the regularized Hodge Laplacian. This polynomial is evaluated through local operator applications. The closed loop has a unique global trajectory from every safe initial state. The model data and initial state are supplied with procedures for approximation to any requested accuracy with explicit error bounds. We assume that hole area is measurable and bounded, holes evolve continuously without splitting or merging, and only finitely many temporary new holes appear. Under these conditions, offline finite calculations produce rational lower bounds for the cumulative hole area on the time horizon without querying trajectory data. These bounds converge to that cumulative area, whether finite or infinite. If it is finite, holes close in the time-averaged sense: the average normalized hole area over every fixed-length time window tends to zero as time tends to infinity. In a 15-second numerical demo, all holes close by 9.9s and remain closed through the end.
\end{abstract}

\begin{IEEEkeywords}
multi-agent systems, distributed control, algebraic topology, simplicial
complexes, coverage control.
\end{IEEEkeywords}

%

\section{Introduction}
\label{sec:intro}

In nature, among groups of animals such as bird flocks and fish schools, local interaction
rules can generate collective motions
\cite{reynolds1987flocks,vicsek1995novel}. The same principle motivates
groups of mobile robots known as \emph{swarms}. For multi-robot systems,
sensor coverage is a basic problem: each robot senses
a disk around its position, and together the robots should cover their
operating region. As the robots move, the covered set can develop holes: regions enclosed by the
sensing disks that receive no coverage at all. These holes are exactly the holes of the \v{C}ech complex built from the
current robot positions \cite{desilva2007coverage,ghrist2008barcodes}.
The holes of a
moving cover are detected by persistent homology
\cite{desilva2007coverage,ghrist2008barcodes,edelsbrunner2010computational}
and verified without localization or with switched higher-order
Laplacians \cite{tahbazsalehi2010coverage,muhammad2007dynamic}.

Related work on repair and coverage includes simplicial coverage repair for robot teams \cite{derenick2010simplicial}, gradient descent on weighted
combinatorial Laplacians \cite{yadokoro2023weighted}, homology-based
repair of agricultural coverage \cite{rysz2025simplicial}, conformal
mapping of domains with holes onto multi-hole disks \cite{feng2026poriferous},
and diffusive coverage of bounded domains
\cite{elamvazhuthi2016coverage}. Related tools include cellular sheaves, which generalize weighted
Laplacians for distributed coordination
\cite{hansen2021opinion,hanks2025cellular}; simplicial filters for
processing on complexes \cite{yang2022simplicial,schaub2020random}; and
occupation-measure relaxations, which express dynamical constraints
and integral costs through measures and lead to linear optimization
problems over measures
\cite{lasserre2008nonlinear,henrion2024occupation}. 
The coverage-repair methods above do not provide a criterion for hole
closure based solely on the model and initial state. We derive such a
criterion for our distributed \emph{harmonic-covariance} controller.
The contributions of this paper are as follows:
\begin{itemize}
\item A feedback controller that converts the swarm \v{C}ech complex's harmonic edge flows, which localize the holes, into locally Lipschitz force laws on the agents (such that the controlled trajectory is unique), together with a distributed realization of the controller (Sections~\ref{sec:controller} and~\ref{sec:analysis}).
\item Under this paper's controlled dynamics and assumptions, we construct an offline sequence of rational lower bounds $\{c_T\}_T$. Each bound is produced by explicit finite calculations without querying trajectory data. The sequence converges to the cumulative hole area over the infinite time horizon. If its limit is finite, the hole area averaged over any fixed time window tends to zero as the window moves forward (Section~\ref{sec:analysis}).
\item A numerical demo in which two holes close within 10 seconds under the regularized harmonic feedback controller (Section~\ref{sec:benchmark}).
\end{itemize}

\begin{figure}[t]
\centering
\newlength{\figOnePanelHeight}
\setlength{\figOnePanelHeight}{0.35\columnwidth}
\begin{minipage}[b]{0.45\columnwidth}
\centering
\includegraphics[height=\figOnePanelHeight,keepaspectratio,
  trim=24bp 26bp 24bp 23bp,clip]{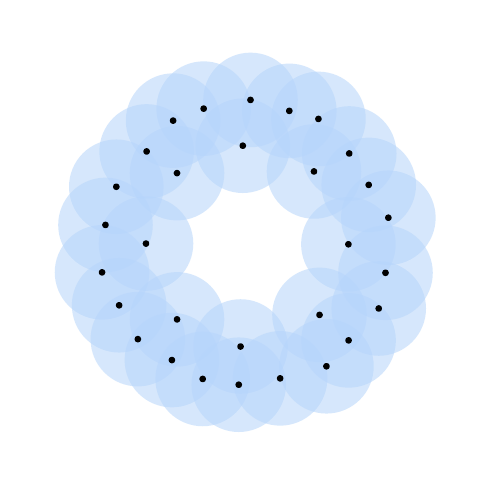}
\end{minipage}\hspace{0.2cm}
\begin{minipage}[b]{0.40\columnwidth}
\centering
\includegraphics[height=\figOnePanelHeight,keepaspectratio,
  trim=14bp 17bp 15bp 12bp,clip]{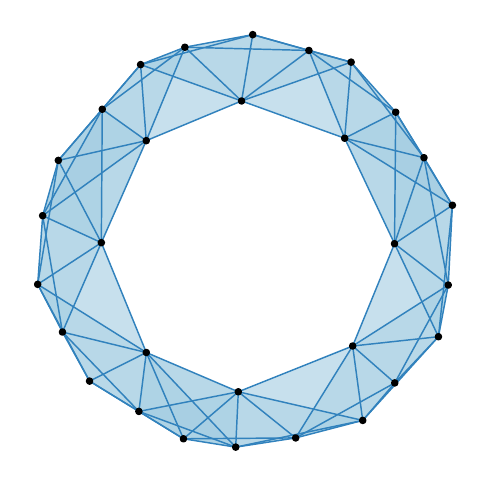}
\end{minipage}
\caption{Left: the union of sensing disks of the 30-agent annulus;
right: its strict \v{C}ech complex.}
\label{fig:schematic}
\end{figure}

The paper is organized as follows. Section~\ref{sec:problem} sets up the task and model. Section~\ref{sec:controller} designs the controller and its distributed realization. Section~\ref{sec:analysis} proves uniqueness of the state space trajectory under the regularized controller, and constructs $\{c_T\}_T$ from the swarm's initial state. Section~\ref{sec:benchmark} reports the numerical simulation. Appendix~\ref{app:constructions} records details to approximate the normalized hole areas in the swarm and the vector field yielded by the control law, whereas Appendix~\ref{app:proofs} supplies proofs to main theoretical results in this paper.

%

\section{Problem Formulation and Sensing Model}
\label{sec:problem}

We consider $N$ mobile sensing agents. Agent $i$ has position
$q_i(t)\in\mathbb R^2$ and velocity $v_i(t)\in\mathbb R^2$; we stack these
variables as $q,v\in\mathbb R^{2N}$ and write the state as $z=(q,v)$. Each
agent possesses a sensing disk centered at itself, and the control objective
is to remove the bounded regions left uncovered by the union of those disks.
Section~\ref{sec:holes}
defines these regions, which we call holes, and their cumulative cost.

\subsection{Sensing, the \v{C}ech complex, and the Hodge Laplacian}
\label{sec:cech}

Each agent senses the open disk of radius $r_s>0$ centered at its position.
The \emph{\v{C}ech 2-complex} records how these disks overlap. For a
configuration $q$, $\mathcal C(q)=\mathcal C(q;r_s)$ has vertices
$[N]=\{1,\ldots,N\}$, an edge $\{i,j\}$ whenever $\|q_i-q_j\|<2r_s$, and a triangle $\{i,j,k\}$ whenever the intersection of the three disks has strictly
positive measure, equivalently whenever the minimal enclosing circle of the
three points has radius strictly smaller than $r_s$ \cite{edelsbrunner2010computational}
(see Fig.~\ref{fig:schematic}).
At a fixed configuration, call the edges and triangles in $\mathcal C(q)$
\emph{active}. Orient each edge by $i<j$ and each triangle consistently with
its boundary edges. These choices define incidence (boundary) matrices
$B_1\in\mathbb R^{N\times E}$ and $B_2\in\mathbb R^{E\times F}$, where $E$
and $F$ count the active edges and triangles. Because $B_1$ and $B_2$
represent consecutive boundary maps, they satisfy $B_1B_2=0$. The
one-dimensional Hodge Laplacian is
$L_1=B_1^{\top}B_1+B_2B_2^{\top}\in\mathbb R^{E\times E}$, and
$\dim\ker L_1=\beta_1(q)$, the first Betti number of $\mathcal C(q)$,
equals the number of holes of the cover
\cite{schaub2020random}. As the disks move, their overlap pattern changes at
edge and triangle thresholds. To make the controller vary smoothly across
these changes, fix $0<\varepsilon_E<2r_s$ and define the scalar cutoff
\begin{equation}
\eta(x)=\begin{cases}\exp\left(-1/x\right),&x>0,\\0,&x\le0.\end{cases}
\label{eq:smooth-cutoff}
\end{equation}
For $e=\{i,j\}$, the edge weight is
\begin{equation}
a_e=a_{ij}=\frac{\eta(s_{ij})}{\eta(s_{ij})+\eta(1-s_{ij})},
\;\;
s_{ij}=\frac{2r_s-\|q_i-q_j\|}{\varepsilon_E}.
\label{eq:edge-weight}
\end{equation}
The weight $a_{ij}$ in \eqref{eq:edge-weight} lies in $[0,1]$ and equals one for
$\|q_i-q_j\|\le2r_s-\varepsilon_E$ and zero for
$\|q_i-q_j\|\ge2r_s$. Between these transition endpoints, it decreases
smoothly as the agents separate. It is locally Lipschitz, with
all distance derivatives vanishing at both thresholds. Thus crossing the
edge threshold creates no jump in the weighted operators.

To keep matrix dimensions fixed as simplices enter and leave the active
complex, use an \emph{ambient roster} containing every possible edge and
triangle. Its sizes are $\overline{E}=\binom{N}{2}$ and $\overline{F}=\binom{N}{3}$, with
oriented incidence matrices
$\overline{B}_1\in\mathbb R^{N\times\overline{E}}$ and
$\overline{B}_2\in\mathbb R^{\overline{E}\times\overline{F}}$. For each ambient triangle $f$,
let $R_f(q)\coloneqq\min_{c\in\mathbb R^2}\max_{\ell\in f}\|q_\ell-c\|$ be
the radius of the minimal enclosing circle of its three vertices. Fix
$0<\varepsilon_F<r_s$ and, using $\eta$ from \eqref{eq:smooth-cutoff}, define
\[
b_f(q)=\frac{\eta(s_f)}{\eta(s_f)+\eta(1-s_f)},
\;\; s_f=\frac{r_s-R_f(q)}{\varepsilon_F}.
\]
The weight $b_f$ is globally Lipschitz in $q$.
Both $a_e$ in \eqref{eq:edge-weight} and $b_f$ can be evaluated to any
prescribed accuracy with known error bounds.
The active restrictions of $\overline{B}_1,\overline{B}_2$ are $B_1,B_2$.

To preserve the corresponding identity
$\overline{B}_1\overline{B}_2=0$ under smooth weights, define
$D_1=\overline{B}_1\operatorname{diag}(a_e)\in\mathbb R^{N\times\overline{E}}$ and define
$D_2\in\mathbb R^{\overline{E}\times\overline{F}}$ entrywise by
\[
D_2[e,f]=\overline{B}_2[e,f]\,b_f\,a_e
\prod_{e'\subset f,\;e'\ne e}a_{e'}^2.
\]
For a triangle $f$ and a node $i$, entry
$(i,f)$ is
\begin{align*}
(D_1D_2)[i,f]
&=\sum_{e\subset f} D_1[i,e]\,D_2[e,f]\\
&=b_f\sum_{e\subset f}\overline{B}_1[i,e]\overline{B}_2[e,f]a_e^2
\prod_{e'\subset f,\;e'\ne e}a_{e'}^2\\
&=b_f\left(\prod_{e'\subset f}a_{e'}^2\right)
  (\overline{B}_1\overline{B}_2)[i,f]=0.
\end{align*}
The regularized Hodge Laplacian is
\[
L_w=D_1^{\top}D_1+D_2D_2^{\top}
\in\mathbb R^{\overline{E}\times\overline{E}}.
\]
A harmonic edge form is a vector in the kernel of the active-edge restriction
of $L_w$. When the weights are binary and
match the current \v{C}ech complex, this restriction of $L_w$ is $L_1$, so
its kernel has dimension $\beta_1(q)$.
Figure~\ref{fig:two-hole-harmonic-modes} shows a numerical orthonormal pair
from the weighted active-edge kernel of a connected two-hole complex.

\begin{figure}[t]
\centering
\includegraphics[width=0.49\linewidth]{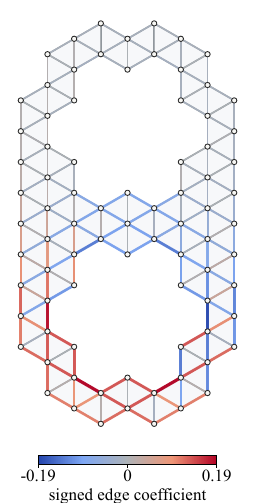}\hfill
\includegraphics[width=0.49\linewidth]{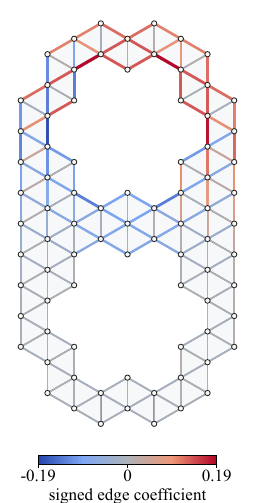}
\caption{Two orthonormal harmonic forms of the weighted active-edge
restriction of $L_w$ for the same connected two-hole complex. Triangles
are shown lightly; colored edges give signed coefficients on the common
scale, while gray denotes the zero level. Orientations are fixed
by $i<j$.}
\label{fig:two-hole-harmonic-modes}
\end{figure}

\subsection{Agents and dynamics}
\label{sec:dynamics}

The agents have second-order dynamics with three acceleration terms: repulsion
prevents hard collisions, damping reduces relative velocity along sensed
edges \cite{jadbabaie2003coordination,olfatisaber2006flocking,cucker2007emergent},
and the controller supplies harmonic-covariance feedback. For a damping gain
$\alpha\ge0$, define
\[
L_0^{\mathrm{reg}}(q)=\overline{B}_1\operatorname{diag}\left(a_e(q)^2\right)
\overline{B}_1^{\top},\;\; D(q)=\alpha L_0^{\mathrm{reg}}(q)\otimes I_2.
\]
The matrix $D(q)$ is symmetric positive semidefinite. An absent edge has
$a_e(q)=0$ and contributes nothing to the damping. Fix a hard distance
$a>0$, a repulsion
activation distance $b>a$, a gain $g>0$, and a smoothing parameter
$\tau_U>0$, and define the pairwise repulsive potential
\begin{equation}
U(q)=\sum_{i<j}\frac{g}{2}\,\frac{\rho_{ij}^4}{\tau_U^2+\rho_{ij}^2},
\;\;
\rho_{ij}=\frac{(b-\|q_i-q_j\|)_+}{\|q_i-q_j\|-a}.
\label{eq:repulsion}
\end{equation}
The ratio $\rho_{ij}$ combines finite-range activation, vanishing at and beyond
$b$, with a barrier as the distance decreases to $a$. The factor
$\rho_{ij}^4/(\tau_U^2+\rho_{ij}^2)$ turns the force on softly at $b$;
the potential retains the hard-boundary divergence used in
Section~\ref{sec:wellposed}. The agents obey
\[
\dot q=v,\;\; \dot v=-\nabla U(q)-D(q)v+u(q),
\]
where $u(q)\in\mathbb R^{2N}$ is the control input designed in
Section~\ref{sec:controller}.

\section{Controller Design}
\label{sec:controller}

\subsection{The harmonic-covariance force}
\label{sec:law}

Harmonic information is stored as scalar values on edges, while each agent
needs a two-dimensional force at a node. We first convert edge values into
node vectors and use these vectors to define force laws on each agent. The ideal
law uses the exact harmonic projector of the binary \v{C}ech complex;
Section~\ref{sec:distributed} replaces it with a fixed-degree polynomial of $L_w$ from Section~\ref{sec:cech}. Here ``exact'' and
``ideal'' refer only to the projector. 

Let $u(q)=F^{H}(q)$ be the ideal harmonic-covariance force. At a fixed
configuration with $E$ active edges, let $P$ be the orthogonal projector onto
$\ker L_1$ defined in Section~\ref{sec:cech} and choose an orthonormal basis
$\{\omega^{(k)}\}$ for that space.
A harmonic edge vector $\omega$ assigns a scalar $\omega_e$ to every 
edge. Orient $e=(p,q)$ by $p<q$ and set $d_e=q_q-q_p$. At endpoint $i$, let
$\hat t_e$ be the unit vector pointing along the edge away from $i$ and read $\omega_e$ with that
local orientation (i.e, for $e = (p, q)$, $\omega_{(p,q)} = -\omega_{(q,p)}$).

Fix $\lambda_R>0$ and define the \emph{lift} vector $y_i(\omega)$ of $\omega$ for agent $i$ as the unique
minimizer of the following least-squares problem
\begin{equation}
y_i(\omega)=\arg\min_{w\in\mathbb R^2}\;
\lambda_R\|w\|^2+\sum_{e\ni i}
\left(a_e\,w^{\top}\hat t_e-\frac{\omega_e}{\|d_e\|}\right)^2,
\label{eq:lift}
\end{equation}
where the factor $a_e\in[0,1]$ is the edge weight in
\eqref{eq:edge-weight}, and $\lambda_R\|w\|^2$ makes the problem strictly convex. Hence~\eqref{eq:lift} has a unique solution  $y_i=G_i^{-1}T_i$ with
\begin{equation}
G_i=\lambda_RI_2+\sum_{e\ni i}a_e^2\,\hat t_e\hat t_e^{\top},\;\;
T_i=\sum_{e\ni i}a_e\,\hat t_e\,\frac{\omega_e}{\|d_e\|}.
\label{eq:gram}
\end{equation}
In global edge coordinates, let $R_i \in\mathbb{R}^{2\times E}$ with
\[
(R_i)_{:,e}=
\begin{cases}
a_eG_i^{-1}d_e/\left\|d_e\right\|^2,&i\in e,\\
0,&i\notin e
\end{cases}.
\]
It then follows that $y_i(\omega^{(k)})=R_i\omega^{(k)}$. Because a single harmonic vector may concentrate
around one hole and miss others, we aggregate the lifts of the full orthonormal basis in the node ``covariance''
\begin{equation}
C_i=\sum_{k}y_i(\omega^{(k)})\,y_i(\omega^{(k)})^{\top}
=R_iPR_i^{\top}\succeq0,
\label{eq:cov}
\end{equation}
where $\sum_k\omega^{(k)}\omega^{(k){\mathsf T}}=P$,
$P\in\mathbb R^{E\times E}$; thus
$C_i\in\mathbb R^{2\times2}$. Division by $\|d_e\|$ puts the edge values on a
common distance scale, while the outer-product sum removes the sign and basis choices. Reading the endpoint covariances along an edge gives a nonnegative scalar score, which sets the strength of an attractive pair force on that edge. For $e=\{i,j\}$, gain $\kappa>0$, and common force limit $c_H>0$, define
\begin{align}
s_e
&=(1/2)\hat t_e^{\top}\left(C_i+C_j\right)\hat t_e\ge0,
\label{eq:score}\\
F_i^{e}
&=\kappa a_es_e(q_j-q_i),
\;\; F_j^{e}=-F_i^{e},\;\;
F_i^{\mathrm{raw}}=\sum_{e\ni i}F_i^{e},\notag\\
\lambda
&=\frac{c_H}{\max\left\{c_H,\max_\ell\|F_\ell^{\mathrm{raw}}\|\right\}},
\;\; F_i^{H}=\lambda F_i^{\mathrm{raw}}.
\label{eq:cap}
\end{align}
Equation~\eqref{eq:score} gives the edge score, while \eqref{eq:cap} assembles
the incident pair forces and scales every node by the same factor $\lambda$.
Since $C_i,C_j\succeq0$, the score is nonnegative and independent of edge
orientation. The weight $a_e$ attenuates the force near loss of support (see
Fig.~\ref{fig:forces}). Each pair is equal and opposite and collinear with its
edge, so the harmonic force has zero net force and zero torque. The common
scaling by $\lambda$ in \eqref{eq:cap} preserves these properties and enforces
$\|F_i^H\|\le c_H$ at every node.

\begin{figure}[t]
\centering
\newlength{\figTwoPanelHeight}
\setlength{\figTwoPanelHeight}{0.42\columnwidth}
\begin{minipage}[t]{0.5\columnwidth}
\centering
\includegraphics[height=\figTwoPanelHeight,keepaspectratio,
  trim=14bp 17bp 1bp 12bp,clip]{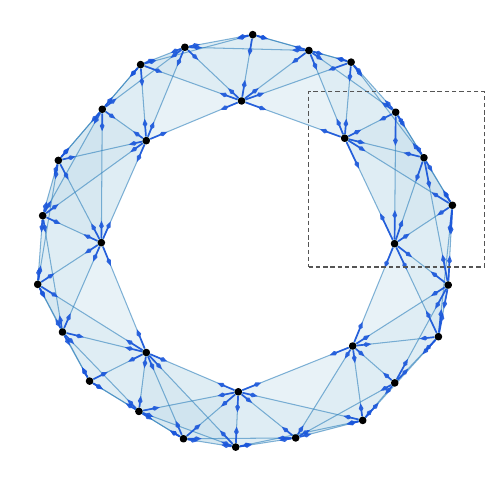}\\[1pt]
\footnotesize Edge forces $F_i^e$
\end{minipage}\hfill
\begin{minipage}[t]{0.42\columnwidth}
\centering
\includegraphics[height=\figTwoPanelHeight,keepaspectratio,
  trim=0bp 0bp 40bp 0bp,clip]{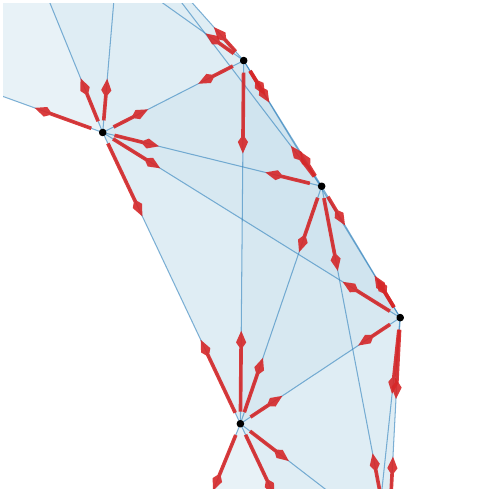}\\[1pt]
\footnotesize Zoomed-in detail
\end{minipage}
\caption{Edge forces of the harmonic-covariance law.}
\label{fig:forces}
\end{figure}

\begin{lemma}
\label{lem:properties}
For every $\overline{q}\in\mathbb R^2$,
$\sum_i(q_i-\overline{q})^{\top}F_i^{H}\le0$.
\end{lemma}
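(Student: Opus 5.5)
The plan is to rewrite the sum edge by edge, using the pairwise structure of $F^{\mathrm{raw}}$ and the fact that $\lambda>0$ is a common scalar. Since $F_i^H=\lambda F_i^{\mathrm{raw}}$ with $\lambda\in(0,1]$ the same for all nodes, it suffices to show $\sum_i(q_i-\overline{q})^{\top}F_i^{\mathrm{raw}}\le0$, and then multiply by $\lambda>0$.

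Next I would expand $F_i^{\mathrm{raw}}=\sum_{e\ni i}F_i^e$ and exchange the order of summation. This turns the node sum into a sum over active edges. Each edge $e=\{i,j\}$ contributes
\[
(q_i-\overline{q})^{\top}F_i^e+(q_j-\overline{q})^{\top}F_j^e .
\]
Using $F_j^e=-F_i^e$, the $\overline{q}$ terms cancel; this is the zero-net-force property noted after \eqref{eq:cap}, and it is why the inequality holds for every $\overline{q}$. The contribution therefore reduces to
\[
(q_i-q_j)^{\top}F_i^e=\kappa a_e s_e (q_i-q_j)^{\top}(q_j-q_i)=-\kappa a_e s_e\|q_i-q_j\|^2 .
\]
Finally, $\kappa>0$, $a_e\in[0,1]$ by \eqref{eq:edge-weight}, and $s_e\ge0$ by \eqref{eq:score}, the last because $C_i,C_j\succeq0$ from \eqref{eq:cov}. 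Hence each edge term is nonpositive. Summing over edges and multiplying by $\lambda$ gives
\[
\sum_i(q_i-\overline{q})^{\top}F_i^H=-\lambda\kappa\sum_{e}a_es_e\|d_e\|^2\le0 .
\]

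There is no substantive obstacle. The only points needing care are bookkeeping: the sum $\sum_{e\ni i}$ should run over the same edge set (active edges, or ambient edges with $a_e=0$ where absent) for both endpoints, so that each pair force appears exactly twice with opposite signs. One should also note that $\lambda$ is well defined and positive, because its denominator is at least $c_H>0$.
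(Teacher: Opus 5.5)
Your proposal is correct and follows essentially the same route as the paper's proof. The paper likewise regroups the node sum into edge pairs, cancels $\overline{q}$ using $F_j^e=-F_i^e$, obtains $-\lambda\kappa\sum_e a_e s_e\|q_j-q_i\|^2$, and concludes from $\lambda,\kappa,a_e,s_e\ge0$.
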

What Lemma~\ref{lem:properties} implies, in plain language, is that the net harmonic force tends to pull all agents together. Yet the lemma guarantees neither monotonic decrease of the hole areas nor their eventual closure.

\subsection{Distributed realization of harmonic-covariance forces}
\label{sec:distributed}

Direct computation of $P$ in \eqref{eq:cov} requires eigenvectors in the
kernel of the full matrix $L_1$, which requires centralized computation. For a
fixed binary $L_1$, $P=\lim_{\tau\to\infty}e^{-\tau L_1}$. The exponential leaves the zero eigenspace unchanged and attenuates every other subspace. This limit also applies to $L_w$ and one may use $e^{-\tau L_w}$ as an approximate projector onto $\ker L_w$. Evaluating $e^{-\tau L_w}$ directly, however, still requires centralized computation. We therefore replace it by a polynomial of degree $d$, which requires $d$ local applications of $L_w$~\cite{shuman2011chebyshev,yang2022simplicial,zhang2020barriers}. The factor
$\lambda$ in \eqref{eq:cap} is computed separately. We choose a Chebyshev
expansion because it treats the full spectral interval uniformly. A Taylor
expansion is centered at a single point and requires a separate remainder
estimate to control its error over the full interval.

The Chebyshev interval requires a state-independent spectral upper bound, i.e., some $B\ge\lambda_{\max}(L_w)$. Since the entries of $D_1$ and $D_2$ have magnitude at most one, for $N\ge2$ the row-sum
bounds give $\left\|D_1^{\top}D_1\right\|_{\infty}\le2N-2$ and
$\left\|D_2D_2^{\top}\right\|_{\infty}\le3\left(N-2\right)$.
Consequently,
$\lambda_{\max}(L_w)\le\|L_w\|_{\infty}
\le\|D_1^\top D_1\|_\infty+\|D_2D_2^\top\|_\infty\le 5N-8\eqqcolon B$.
Since $L_w$ is positive semidefinite, its spectrum lies in $[0,B]$, and the
affine map $2\lambda/B-1$ sends this interval to $[-1,1]$. Accordingly, define
$\widetilde L_w=2L_w/B-I$. Put $\sigma=\tau B/2$. Let $T_j$ be the Chebyshev polynomial
defined by $T_j(\cos\theta)=\cos(j\theta)$, and let $I_j$ be the modified
Bessel function of the first kind. Define
\[
c_j=
\begin{cases}
e^{-\sigma}I_0\left(\sigma\right),&j=0,\\
2e^{-\sigma}\left(-1\right)^jI_j\left(\sigma\right),&j\ge1,
\end{cases}
\]
In the Jacobi--Anger expansion of $e^{-\tau\lambda}$ \cite{nistDLMF10352}, its
degree-$d$ truncation is
\[
p_d(\lambda)
=\sum_{j=0}^{d}c_j\,T_j\left(\frac{2\lambda}{B}-1\right).
\]
The symmetric matrix $H_d=p_d(L_w)$ approximates the exponential. Its
parameters $\tau,B,d,c_j$ are fixed and can be evaluated offline, so no online
spectral estimate is needed.

We aim to span $\ker L_w$ and generate edge signals by approximately projecting random initial vectors onto $\ker L_w$. We initialize $Z\in\mathbb R^{\overline{E}\times K}$ once at random with $K>1$ columns on the fixed ambient roster of Section~\ref{sec:cech}, and set $Y=e^{-\tau L_w}Z\approx H_dZ\eqqcolon Y_d$. Computationally, $Y_d$ may be evaluated by the following recursive procedure:
\[
\begin{aligned}
X_0=Z,\;\; X_1&=\widetilde L_wZ\;\;(d\ge1),\\
X_{r+1}=2\widetilde L_wX_r&-X_{r-1},
\;\; 1\le r<d,\\
Y_d=\sum_{r=0}^{d}c_rX_r
&=p_d(L_w)Z=H_dZ.
\end{aligned}
\]
All these computations may be performed locally.

To convert the columns of $Y_d$ back to node quantities, extend the active-edge map $R_i$ to $\mathbb R^{2\times\overline{E}}$ by assigning zero columns to absent edges. The lift of $Y_d$'s $k$-th column is $y_{i,k}=R_i(Y_d)_{:,k}$, and their
outer products give the probe covariance
\begin{equation}
\widehat C_i=\sum_{k}y_{i,k}y_{i,k}^{\top}
=R_iH_dZZ^{\top}H_d^{\top}R_i^{\top} \succeq 0.
\label{eq:sketchcov}
\end{equation}
Here $y_{i,k}$ is analogous to $y_i(\omega^{(k)})$ in
Section~\ref{sec:law}. Since $\widehat{C}_i\succeq 0$, substituting it into~\eqref{eq:score} and~\eqref{eq:cap} preserves nonnegative edge scores. The rules in \eqref{eq:cap} are otherwise unchanged, except $(C_i+C_j)$ is replaced by $(\widehat{C}_i + \widehat{C}_j)$. The factor $\lambda$ uses the maximum raw force norm over all nodes, so the distributed realization assumes a connected communication graph over which this maximum can propagate. Damping is likewise local because $L_0^{\mathrm{reg}}(q)$ couples each agent only to its current one-hop
neighbors.


\section{Well-Posedness and Bounds on Cumulative Hole Areas}
\label{sec:analysis}

After the regularized controller is defined, it is natural to raise the following questions:
\begin{enumerate}[noitemsep]
    \item Given a safe initial state, does the controller yield \emph{one unique} trajectory?
    \item Given an initial state, without simulation, can we determine whether the holes will eventually close under the controlled dynamics?
\end{enumerate}
Section~\ref{sec:wellposed} gives the first question a positive answer, and Section~\ref{sec:initial-state-construction} answers the second question with an offline construction of a rational sequence $\{c_T\}_T$ converging to the all-time cumulative hole area, with each term obtained by finite calculation.

\subsection{Holes and the hole-area cost}
\label{sec:holes}

All sensing disks are open, and their union at configuration $q$ is
$A(q)=\bigcup_{i\in[N]}B(q_i,r_s)$. The \emph{holes} are the bounded connected
components of $\mathbb R^2\setminus A(q)$. Let $H(q)$ be their union and let
$|\cdot|$ denote two-dimensional Lebesgue measure. A terminal hole count does
not record how much uncovered area persisted before the terminal time. We
therefore use the normalized instantaneous area and its all-time cumulative
cost:
\begin{equation}
h(q)=\frac{|H(q)|}{(8Nr_s)^2};\;
C=\int_0^{\infty}h(q(t))\,\mathrm dt\in[0,\infty).
\label{eq:cost}
\end{equation}
If our proposed controller is successful, our objective will be to show that $C$ is finite.
Every hole lies in the convex hull of one
sensing-graph component. A component with $k_j$ vertices has center diameter
at most $2r_s(k_j-1)$ and fits in a box of area at most
$4r_s^2(k_j-1)^2$; the normalization by $(8Nr_s)^2$ therefore gives
$0\le h(q)\le H_N$, where
$H_N\coloneqq (N-1)^2/(16N^2)< 1/16$. Section~\ref{sec:cech}
records the correspondence between the holes and $\beta_1(q)$.

\subsection{Assumptions}

The admissible
motion class satisfies the following standing assumptions:
\begin{enumerate}[label=A\arabic*]
\item $\|q_i(t)-q_j(t)\|>a$ for all $i<j$ and all $t\ge0$.
\item The normalized hole area $h(q(t))$ is well defined, measurable, and bounded for
all $t\ge0$. 
\item Hole components grow and shrink continuously and do not split
or merge. 
\item At most finitely many holes appear after time zero, and every
such hole is \emph{spurious} (present only over a bounded time interval).
\end{enumerate}
Without splitting or merging, each hole keeps a well-defined identity
over its lifetime, and since every hole appearing after time zero is
spurious, their cumulative total area is finite, and therefore will not affect the finiteness of $C$. The measurability and
boundedness assumptions make the all-time integral \eqref{eq:cost} well
defined.

\subsection{Well-posedness}
\label{sec:wellposed}

For our controlled dynamics, the natural state space is the safe open set $\Omega=\left\{(q,v):\min_{i<j}\|q_i-q_j\|>a\right\}$. 

\begin{proposition}
\label{prop:wellposed}
For every safe initial state $z_0\in\Omega$, the controlled dynamics of Section~\ref{sec:dynamics}
with the regularized law of
Section~\ref{sec:distributed}
has a unique solution $z:[0,+\infty)\to\Omega$. That means no hard
collision occurs in finite time.
\end{proposition}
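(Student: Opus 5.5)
The approach is the standard one: local existence and uniqueness from local Lipschitz continuity, then global extension from an energy estimate plus the repulsive barrier.

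\emph{Step 1: local Lipschitz continuity on $\Omega$.} On $\Omega$ all pairwise distances exceed $a>0$, so each $\|d_e\|$ is bounded below on compact subsets. The factors then behave as follows:
\begin{itemize}
\item The weights $a_e$ and $b_f$ are locally Lipschitz (the former from \eqref{eq:edge-weight}, the latter globally), so $D_1$, $D_2$, $L_w$ and the fixed polynomial $H_d=p_d(L_w)$ are locally Lipschitz in $q$.
\item The unit vectors $\hat t_e=d_e/\|d_e\|$ and the factors $1/\|d_e\|$ are smooth away from coincidence.
\item $G_i\succeq\lambda_R I_2$, so $G_i^{-1}$ is smooth in its entries with $\|G_i^{-1}\|\le 1/\lambda_R$. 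Hence $R_i$, $\widehat C_i$ in \eqref{eq:sketchcov}, the scores $s_e$, and $F^{\mathrm{raw}}$ are locally Lipschitz. Here $Z$ is fixed, and absent edges are handled by zero columns, which match continuously because $a_e\to0$.
\item The cap $\lambda=c_H/\max\{c_H,\max_\ell\|F^{\mathrm{raw}}_\ell\|\}$ is a composition of Lipschitz maps (max, norm) with a denominator bounded below by $c_H$, so $u(q)$ is locally Lipschitz.
\item $D(q)v$ is locally Lipschitz in $(q,v)$.
\item $\nabla U$ is smooth on $\Omega$ except at distance $b$. There the factor $\rho^4/(\tau_U^2+\rho^2)$ vanishes to fourth order in $(b-r)_+$, so the gradient is $C^1$ across $r=b$.
\end{itemize}
Picard--Lindelöf then gives a unique maximal solution $z:[0,t^*)\to\Omega$.

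\emph{Step 2: energy bound.} Define $V(z)=\tfrac12\|v\|^2+U(q)$. Along solutions,
\[
\dot V=-v^\top D(q)v+v^\top u\le \|v\|\,\|u\|\le \sqrt N c_H\|v\|,
\]
using $D\succeq0$ and the cap $\|F^H_i\|\le c_H$. Since $\|v\|\le\sqrt{2V}$, we get $\tfrac{d}{dt}\sqrt{V}\le \sqrt{N/2}\,c_H$. Hence
\[
V(t)\le\bigl(\sqrt{V(0)}+\sqrt{N/2}\,c_H t\bigr)^2,
\]
which is finite on bounded time intervals. It follows that $v$ is bounded on $[0,t^*)$ if $t^*<\infty$, so $q$ is bounded as well.

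\emph{Step 3: excluding finite-time escape, the main obstacle.} Suppose $t^*<\infty$. Since the trajectory is bounded, it must approach $\partial\Omega$, i.e.\ $\liminf_{t\to t^*}\min_{i<j}\|q_i-q_j\|=a$. Each summand of $U$ is nonnegative, and as $r\downarrow a$ we have $\rho_{ij}\to\infty$ with $\rho^4/(\tau_U^2+\rho^2)\sim\rho^2\to\infty$. So $U(q(t))$ is unbounded along a sequence $t_k\to t^*$, contradicting the bound on $V$ from Step 2.

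The delicate points are that the bound on $V$ must not use $u$ beyond its cap (it does not), and that the barrier truly diverges, which holds because $U\ge0$ pairwise. With these, the solution stays in a compact subset of $\Omega$ on every bounded interval. Standard ODE continuation then yields $t^*=\infty$ and uniqueness on $[0,\infty)$; in particular, no hard collision occurs in finite time.
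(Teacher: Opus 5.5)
Your proposal is correct and follows essentially the same route as the paper's proof. Both establish local Lipschitz continuity of the regularized field (locally Lipschitz weights, fixed-degree polynomial of $L_w$, ridge-regularized lifts, the cap with denominator $\ge c_H$, a locally Lipschitz $\nabla U$) and invoke Picard--Lindel\"of; both then use an energy bound relying only on $\|F_i^H\|\le c_H$ and $D\succeq0$, with boundedness of $U$ keeping all pairwise distances away from $a$ so the solution continues. The only difference is cosmetic: you integrate $\tfrac{d}{dt}\sqrt{V}\le\sqrt{N/2}\,c_H$ to get quadratic growth (apply it to $\sqrt{V+\epsilon}$ to avoid differentiating at $V=0$), whereas the paper uses Young's inequality and Gr\"onwall to get $E(t)\le e^t(E(0)+C_H)-C_H$.
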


\begin{proof}
The proof has two parts. We first show that the vector field is locally
Lipschitz. Indeed, the edge weights and entries of $L_w$ have this property
on $\Omega$, and the fixed-degree polynomial filter preserves it. In
\eqref{eq:cap}, the maximum raw-force norm has the same regularity, while the
denominator is bounded below by $c_H>0$. On the safe open set, the
denominators in the repulsive potential are bounded away from zero locally,
so $U$ is locally $C^{1,1}$; the damping term $D(q)v$ is a finite sum using
the same weights. Hence the full vector field
$z\mapsto\left(v,F^{H}(q)-\nabla U(q)-D(q)v\right)$ is locally Lipschitz.
Picard-Lindel\"of therefore gives a unique solution for a sufficiently short
time after $t=0$.

We next show that this solution can be continued to any time. Fix any finite
horizon $T>0$. Define the energy by
$E(q,v)=\lVert v\rVert^2/2+U(q)$. Its damping dissipation satisfies
$v^{\top}D(q)v=\alpha\sum_{\{i,j\}}a_{ij}(q)^2
\lVert v_i-v_j\rVert^2\ge0$. Moreover, \eqref{eq:cap} gives, via $\|F_i^H\|\leq c_H$, that
\[
v^{\top}F^H
=\sum_i v_i^{\top}F_i^H
\le c_H\sum_i\lVert v_i\rVert
\le \frac12\lVert v\rVert^2+\frac{Nc_H^2}{2},
\]
where the last inequality comes from $(c_H - \|v_i\|)^2 \geq 0$. Thus, with
$C_H=Nc_H^2/2$,
\[
\frac{\mathrm dE}{\mathrm dt}
=v^{\top}F^H-v^{\top}D(q)v
\le \frac12\lVert v\rVert^2+C_H
\le E+C_H.
\]
Consequently, for $t\le T$,
$E(t)\le e^t\left(E(0)+C_H\right)-C_H
\allowbreak\le e^T\left(E(0)+C_H\right)-C_H$. Hence $E$, and therefore
$\lVert v\rVert$, is bounded up to time $T$, and $q$ will not escape
to infinity before $T$. The finiteness of $E$ also enforces that $U(q)$ does not blow up, i.e., one always has $\|q_i - q_j\| > a$. Hence the state stays in a compact subset of $\Omega$ before any finite endpoint. The vector field is bounded on that compact set, so the state has a limit in $\Omega$ at the endpoint. Starting from this limiting state, the same local existence result continues the solution. This gives a unique solution on every finite interval $[0,T]$. Since $T$ is arbitrary, letting $T\to\infty$ gives existence for all $t\ge0$.
\end{proof}

\subsection{Construction of cumulative hole area lower bounds}
\label{sec:initial-state-construction}
We now fix the feedback law and examine whether its cumulative hole area can be
determined from the model and initial state. Fix the safe initial state
$z_0=(q_0,v_0)$ and write the controlled dynamics as
\begin{equation}
\begin{aligned}
\dot z&=F(z),\;\;F(q,v)=\left(v,f(q,v)\right),\\
f(q,v)&=-\nabla U(q)-D(q)v+F^H(q).
\end{aligned}
\label{eq:closed-loop-state}
\end{equation}
For a positive integer horizon $T$, the physical quantity of interest is the
actual hole-area cost from Section~\ref{sec:holes},
\[
J_T=\int_0^T h\left(q(t)\right)\,\mathrm dt.
\]

\subsubsection{Outline of bound construction}

Directly using $h$ is difficult because it need not vary continuously with the agent positions. A small displacement can connect a hole to the exterior, so that region is no longer counted by $h$, even though it remains uncovered. We therefore approach $h$
from below by continuous functions. For each positive integer $T$, it is
possible to construct
$q_T:\Omega\to[0,H_N]$ independent of $v$
such that
\begin{equation}
0\le q_T(z)\le q_{T+1}(z)\le h(q),
\;\; q_T(z)\uparrow h(q).
\label{eq:area-approximation-properties}
\end{equation}
Here $\uparrow$ denotes non-decreasing convergence.
Appendix~\hyperref[par:area-lower-approximation]{\ref*{app:constructions}}
constructs this graph-based family $q_T$. Define
\begin{equation}
A_T=\int_0^T q_T\left(z(t)\right)\,\mathrm dt 
\le J_T\le C
\label{eq:approximated-area-cost}
\end{equation}
such that $A_T\uparrow C$. For fixed
$T$, it is possible to compute rational bounds
\begin{equation}
L_{T,m}\le A_T\le U_{T,m}.
\label{eq:main-text-bounds}
\end{equation}
The index $m$ will tighten a fixed-$T$ finite calculation, whereas increasing $T$
both extends the time horizon and improves the lower approximation $q_T$ of
$h$. Consequently, the stopping tolerance controls the upper and lower bounds of $A_T$. The details proceed in three steps. Section~\ref{sec:recovering-trajectory}
constructs integral constraints that recover the cumulative behavior of the
unique trajectory. Section~\ref{sec:approximating-field} replaces the controlled
field in those constraints by finite rational data. Section~\ref{sec:finite-area-optimization}
then computes $L_{T,m}$ and $U_{T,m}$ in~\eqref{eq:main-text-bounds}.



\subsubsection{Recovering the trajectory through integral constraints}
\label{sec:bound-construction}
\label{sec:recovering-trajectory}

The construction begins by enclosing every possible state that the true trajectory can
visit before time $T$. The energy, displacement, and repulsion estimates of
Section~\ref{sec:wellposed} give rational bounds $Q_T,V_T$ and
$r_T>a$, depending only on $F,z_0,T$, such that the trajectory lies in a compact subset
\begin{equation}
K_T
=\left\{\,(q,v)\in\Omega:
\begin{aligned}
&\lVert q\rVert\le Q_T,\;\; \lVert v\rVert\le V_T,\\
&\min_{i<j}\lVert q_i-q_j\rVert\ge r_T
\end{aligned}
\,\right\}\subset\Omega.
\label{eq:finite-horizon-domain}
\end{equation}
Write $X_T=[0,T]\times K_T$.

We now add constraints to rule out candidate trajectories that are
inconsistent with the controlled dynamics. 
We use scalar test functions to impose the controlled dynamics as integral
constraints and narrow the feasible set.
By Lemma~\ref{lem:occsingleton} and the uniqueness established in
Proposition~\ref{prop:wellposed}, the full family of constraints recovers the
cumulative integrals of the true trajectory.

Choose a scalar rational function $\psi(t,q,v)$ with rational coefficients, $C^1$ on a neighborhood of $X_T$, and satisfying $\psi(T,\cdot)=0$, independently of the unknown trajectory. By the chain rule,
$\mathrm d\psi\left(t,z(t)\right)/\mathrm dt
=\partial_t\psi+v^{\top}\nabla_q\psi+f^{\top}\nabla_v\psi$, with the right-hand side evaluated at $\left(t,z(t)\right)$. The fundamental theorem of calculus then yields the integral constraints where the right hand side only involves the initial state:
\begin{equation}
\begin{aligned}
&\int_0^T\left(\partial_t\psi+F^{\top}\nabla_z\psi\right)
 \left(t,z(t)\right)\,\mathrm dt\\
&\;\;=\underbrace{\psi\left(T,z(T)\right)}_{=0}-\psi(0,z_0)
=-\psi(0,z_0).
\end{aligned}
\label{eq:occupation-liouville}
\end{equation}
To obtain a sequence of finite constraint sets, we enumerate all
polynomials with rational coefficients as $p_1,p_2,\ldots$ and define
$\psi_\ell(t,z)=p_\ell(t,z)-p_\ell(T,z)$, so
$\psi_\ell(T,z(T))=0$.
For each index $\ell$, define
\begin{equation}
\begin{aligned}
a_\ell=\partial_t\psi_\ell+F^{\top}\nabla_z\psi_\ell\\
\implies\int_0^T a_\ell\left(t,z(t)\right)\,\mathrm dt=-&\psi_\ell(0,z_0)\eqqcolon -\beta_\ell.
\end{aligned}
\label{eq:test-polynomial-identities}
\end{equation}

\subsubsection{Approximating the controlled vector field}
\label{sec:approximating-field}

The integral identities in \eqref{eq:test-polynomial-identities} constrain
cumulative behavior, but they are not yet finite rational data: $a_\ell$
contains the generally non-polynomial field $F$, and $\beta_\ell$ may not be
rational. We make both quantities finite rational data by approximating the
field and the initial-value term. The integer $k$ below records the requested
field accuracy.

\begin{proposition}
\label{prop:field-approximation}
Suppose uniform rational approximations of the model data, with explicit error bounds, can be constructed on compact sets to any prescribed accuracy. For every integer $k\ge1$, a finite construction from the model data alone produces a continuous piecewise-affine field $F_k$ with rational coefficients on a neighborhood of $K_T$, satisfying $\lVert F_k-F\rVert_{\infty,K_T}\le2^{-k}$.
\end{proposition}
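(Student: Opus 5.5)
The plan is to build $F_k$ by piecewise-affine interpolation of finitely many rational approximate field values, using the local Lipschitz property from Proposition~\ref{prop:wellposed} to control the error between nodes. Throughout, $K_T$ is the compact set from \eqref{eq:finite-horizon-domain}; its bounds $Q_T,V_T,r_T$ are rational and computable from $F,z_0,T$.

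\textbf{Step 1: a neighborhood and a Lipschitz constant.} Enlarge $K_T$ to the compact set $K_T'$ obtained by relaxing the bounds to $Q_T+1$, $V_T+1$ and $(a+r_T)/2$. Then $K_T'\subset\Omega$, and it contains a neighborhood of $K_T$. On $K_T'$ we need an explicit rational Lipschitz constant $L_F$ for $F=(v,f)$. We obtain it by tracing the chain of compositions in Proposition~\ref{prop:wellposed}, using the following facts:
\begin{itemize}
\item $\eta$ and the weight quotients $a_e,b_f$ have explicitly boundable derivatives, and their denominators satisfy $\eta(s)+\eta(1-s)\ge\eta(1/2)>0$.
\item $L_w$ has entries bounded by one, and its spectral bound is $B$.
\item $H_d=p_d(L_w)$ is a fixed polynomial with coefficients $|c_j|\le 2$.
\item $G_i\succeq\lambda_R I$, so $\|G_i^{-1}\|\le1/\lambda_R$.
\item $\|d_e\|\ge (a+r_T)/2$ bounds the factors $1/\|d_e\|$.
\item The cap denominator is at least $c_H$.
\item $\nabla U$ and its derivative are controlled by rational bounds, since $\|q_i-q_j\|-a\ge(r_T-a)/2$ on $K_T'$.
\item $D(q)v$ is controlled by $V_T+1$.
\end{itemize}
Where a factor such as $R_f$ (minimal enclosing radius) or a $\max$ is only Lipschitz, we use its Lipschitz constant directly. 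We then round the resulting product and sum of rational bounds upward to a rational $L_F$.

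\textbf{Step 2: grid, vertex values and interpolation.} Cover a rational box containing $K_T$ by a uniform rational grid of mesh $h$, and take the Freudenthal (Kuhn) triangulation of that grid. Keep only the simplices that meet $K_T$; with $h$ small, these simplices lie inside $K_T'$. At each vertex $x$, use the hypothesis to compute a rational vector $\widetilde F(x)$ with $\|\widetilde F(x)-F(x)\|_\infty\le 2^{-k-1}$. This is possible because $F$ is a composition of the model data ($\eta$, $\exp$, the Bessel coefficients $c_j$, square roots in norms, $R_f$, matrix inverses), each approximable with error bounds. The errors are propagated via the Lipschitz bounds of Step 1 by requesting sufficiently fine input accuracy. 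Define $F_k$ as the affine interpolant of $\widetilde F$ on each simplex. It is continuous, piecewise affine and rational, because barycentric coordinates on a rational Kuhn simplex are rational affine functions.

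\textbf{Step 3: the error estimate.} For a point $z$ in a simplex with vertices $x_0,\dots,x_n$ and barycentric weights $\mu_j$, write
\[
F_k(z)-F(z)=\sum_j\mu_j\bigl(\widetilde F(x_j)-F(x_j)\bigr)+\sum_j\mu_j\bigl(F(x_j)-F(z)\bigr).
\]
The first sum is at most $2^{-k-1}$. The second is at most $L_F\,\mathrm{diam}$, and the simplex diameter is at most $h\sqrt{n}$ with $n=4N$. Choosing the rational mesh $h\le 2^{-k-1}/(L_F\sqrt{4N})$ gives $\|F_k-F\|_{\infty,K_T}\le2^{-k}$.

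The main obstacle is Step 1. Producing an explicit, certified Lipschitz constant through the filtered lift $R_iH_dZZ^\top H_d^\top R_i^\top$, the edge scores and the global cap $\lambda$ requires careful bookkeeping. I would handle it modularly, with one lemma per building block (weights, $D_1,D_2$, $H_d$, $G_i^{-1}$, scores, cap) giving rational sup and Lipschitz bounds on $K_T'$, and then combine them with the product and quotient rules. The vertex-wise approximation in Step 2 reuses the same bounds as error-propagation factors.
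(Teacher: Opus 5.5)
Your proposal is correct and follows essentially the same route as the paper's construction in Appendix~\ref{app:field-construction}: triangulate a rational polyhedral neighborhood of $K_T$ that keeps a positive margin inside $\Omega$, store rational vertex values within $2^{-k-1}$ of $F$, interpolate barycentrically, and refine the mesh below $2^{-k-1}/L$ so that the two error terms sum to $2^{-k}$. Your Step~1 is more explicit than the paper, which simply asserts a rational Lipschitz bound $L_T$; two small corrections there: it is the entries of $D_1,D_2$ that are bounded by one, not those of $L_w$ (whose diagonal can reach $N$), which is harmless since any explicit bound such as $B$ suffices; and if the statement's sup norm uses the Euclidean vector norm, as the paper does, you should impose the vertex tolerance in $\|\cdot\|_2$ rather than $\|\cdot\|_\infty$.
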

Appendix~\ref{app:field-construction} proves
Proposition~\ref{prop:field-approximation} by explicit construction.
For this $F_k$, choose a rational
$\Gamma_\ell\ge\lVert\nabla_z\psi_\ell\rVert_{\infty,X_T}$ and a rational
$\beta_{\ell k}$ satisfying
$\left|\beta_{\ell k}-\beta_\ell\right|\le2^{-k}$. Set
\begin{equation}
a_{\ell k}=\partial_t\psi_\ell+F_k^{\top}\nabla_z\psi_\ell,
\;\;
\left\lVert a_{\ell k}-a_\ell\right\rVert_{\infty,X_T}
\le \Gamma_\ell2^{-k}.
\label{eq:test-integrand-approximation}
\end{equation}
At stage $m$, compute $a_{\ell k}$ and $\beta_{\ell k}$ for every pair $1\le\ell,k\le m$. We use these quantities to construct constraints for the finite optimization problem below to compute the lower and upper bounds of $A_T$.

\subsubsection{Finite optimization of the area bounds}
\label{sec:finite-area-optimization}
To formulate this optimization problem, we approximate $q_T$ by a continuous rational piecewise polynomial $\widehat q_{T,m}$ with
\begin{equation}
\left\lVert\widehat q_{T,m}-q_T\right\rVert_{\infty,K_T}\le2^{-m}.
\label{eq:objective-approximation}
\end{equation}
Appendix~\ref{app:objective-construction} gives this construction.
Stage $m$ now has $m^2$ constraints and one approximate-area objective.
For the true trajectory, the exact identity
$\int_0^T a_\ell(t,z(t))\,\mathrm dt=-\beta_\ell$ gives, for each $1\le \ell, k\le m$,
\begin{equation}
\begin{aligned}
\int_0^T &a_{\ell k}(t,z)\mathrm dt+\beta_{\ell k}
=\int_0^T\left(a_{\ell k}-a_\ell\right)(t,z)\mathrm dt+\beta_{\ell k}-\beta_\ell\\
&\implies\left|\int_0^T a_{\ell k}(t,z)\mathrm dt+\beta_{\ell k}\right|
\le\left(T\Gamma_\ell+1\right)2^{-k}.
\end{aligned}
\label{eq:main-test-residual}
\end{equation}
Lemma~\ref{lem:staticeval} in Appendix~\ref{app:proofs} shows that there exists a collection of points $x_r=(t_r,z_r)\in X_T$ and weights $w_r\ge0$, with total weight $T$, that can simultaneously represent the $m^2$ constraints and the one objective value, using
at most $M=m^2+2$ points. For every $1\le\ell,k\le m$, this gives
\begin{equation}
\begin{aligned}
\int_0^T a_{\ell k}(t,z(t))\,\mathrm dt
&=\sum_{r=1}^{M}w_ra_{\ell k}(x_r),\\
\int_0^T\widehat q_{T,m}(z(t))\,\mathrm dt
&=\sum_{r=1}^{M}w_r\widehat q_{T,m}(z_r),
\end{aligned}
\label{eq:common-finite-representation}
\end{equation}
where $r$ indexes the common representing
points. The desired optimization problems then follow from the constraints and approximated cost $\widehat q_{T,m}$, for each $1\le\ell,k\le m$:
\begin{equation}
\begin{aligned}
&S_{T,m}(x,w)=\sum_{r=1}^{M}w_r\widehat q_{T,m}(z_r),\\
&j^-_{T,m}=\min_{x,w}S_{T,m}(x,w),\;\;
j^+_{T,m}=\max_{x,w}S_{T,m}(x,w),\\[-1mm]
&\text{s.t.}\;x_r=(t_r,z_r)\in X_T,\;\; w_r\ge0,\;\;
\sum_{r=1}^{M}w_r=T,\\
&\left|\sum_{r=1}^{M}w_r a_{\ell k}(x_r)+\beta_{\ell k}\right|
\le\left(T\Gamma_\ell+1\right)2^{-k}.
\end{aligned}
\label{eq:finite-static-constraints}
\end{equation}
The last line of~\eqref{eq:finite-static-constraints} directly comes from~\eqref{eq:main-test-residual} and~\eqref{eq:common-finite-representation}.
Minimizing and
maximizing $S_{T,m}$ subject to the constraints in~\eqref{eq:finite-static-constraints}
therefore gives the bounds
\begin{equation}
j^-_{T,m}-T2^{-m}\le A_T\le j^+_{T,m}+T2^{-m}.
\label{eq:static-extrema-bracket}
\end{equation}
The term $T2^{-m}$ accounts for replacing $q_T$ by
$\widehat q_{T,m}$. Note that~\eqref{eq:finite-static-constraints} should not be solved via ordinary numerical optimization, which may return a feasible or locally optimal candidate without establishing the global bounds needed in \eqref{eq:static-extrema-bracket}. A feasible objective value bounds the minimum from above and the maximum from below; \eqref{eq:static-extrema-bracket} requires the opposite.
Instead, for fixed $T,m$, 
we perform a binary search on 
a threshold value, $\theta$. 
Real quantifier elimination \cite[Ch.~14]{basu2006algorithms} decides feasibility with $S_{T,m}<\theta$ or $S_{T,m}>\theta$, and these answers update rational enclosures of the minimum and maximum. Repeating at rational midpoints encloses both extrema to any demanded accuracy. Round $j^-_{T,m}-T2^{-m}$ downward and $j^+_{T,m}+T2^{-m}$ upward to rational numbers $L_{T,m}$ and $U_{T,m}$, respectively, with each rounding error at most $2^{-m}$. This gives \eqref{eq:main-text-bounds}. Applying Lemma~\ref{lem:staticeval} with $g(t,z)=q_T(z)$ and $g_m(t,z)=\widehat q_{T,m}(z)$ shows that these bounds satisfy $U_{T,m}-L_{T,m}\to0$ for each fixed $T$. With $c_0=0$, define
\begin{equation}
\begin{aligned}
m_T&=\min\left\{m\ge1:U_{T,m}-L_{T,m}<2^{-T}\right\},\\
(\ell_T,u_T)&=\left(L_{T,m_T},U_{T,m_T}\right),\;\;
c_T=\max\left\{c_{T-1},0,\ell_T\right\}.
\end{aligned}
\label{eq:stopping-and-outer-update}
\end{equation}
For fixed $T,m$, the binary search above refines the numerical bounds on the two extrema. If $L_{T,m},U_{T,m}$ do not satisfy \eqref{eq:stopping-and-outer-update}, increase $m$ and solve \eqref{eq:finite-static-constraints} again. The gap convergence above proves that this loop terminates and yields $\ell_T\le A_T\le u_T$ with $u_T-\ell_T<2^{-T}$.

By construction, $\mathbf 1_{[0,T]}(t)q_T(z(t))$ increases pointwise to $h(q(t))$. Monotone convergence theorem then gives $A_T\uparrow C$, also when $C=+\infty$. If $S\le T$, then $\ell_S\le A_S\le A_T$. Hence the running maximum in \eqref{eq:stopping-and-outer-update} is non-decreasing and satisfies $A_T-2^{-T}<\ell_T\le c_T\le A_T\le C$. Therefore $c_T\uparrow C$. By this point, one completes the proof to the following result:

\begin{theorem}
\label{thm:lower}
Assume that we can calculate rational approximations of the model data and the safe initial state $z_0$ to any prescribed accuracy, with explicit error bounds. For every\hspace*{0.25pt} integer $T\ge1$, the construction \eqref{eq:stopping-and-outer-update} produces a rational number $c_T$ in finitely many steps from these data alone, and
\begin{equation}
c_T\uparrow C=\int_0^\infty h(q(t))\,\mathrm dt,
\label{eq:cn}
\end{equation}
with $C=+\infty$ allowed.
\end{theorem}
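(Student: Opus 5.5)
The proof has two parts: finiteness and data-only computability of $c_T$, and the convergence $c_T\uparrow C$. Both are assembled from the construction preceding the statement.

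\emph{Finiteness.} Fix $T$. First, the rational bounds $Q_T,V_T,r_T$ defining $K_T$ in \eqref{eq:finite-horizon-domain} come from the energy inequality in the proof of Proposition~\ref{prop:wellposed}. That inequality bounds $E(t)$ by $e^T(E(0)+C_H)-C_H$. Since $E(0)$ can be bounded above using a rational approximation of $z_0$ with an explicit error bound, this yields rational bounds on $\|v\|$ and on $U$. The bound on $U$ gives a rational $r_T>a$ by inverting the barrier in \eqref{eq:repulsion}. Integrating $\|v\|\le V_T$ gives $Q_T$. Next, for each stage $m$, the following inputs are finite rational objects obtained from the model data and $z_0$ alone:
\begin{itemize}
\item the fields $F_k$ from Proposition~\ref{prop:field-approximation};
\item the rationals $\beta_{\ell k}$ and $\Gamma_\ell$;
\item the piecewise polynomial $\widehat q_{T,m}$ from \eqref{eq:objective-approximation}.
\end{itemize}
Problem \eqref{eq:finite-static-constraints} is therefore a first-order formula over the reals with rational coefficients. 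It is piecewise polynomial, and the finitely many pieces can be handled by case splits. Real quantifier elimination decides each threshold query, so the bisection yields $L_{T,m},U_{T,m}$ in finitely many steps. The remaining point is that the search for $m_T$ in \eqref{eq:stopping-and-outer-update} terminates. This follows from $U_{T,m}-L_{T,m}\to0$.

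\emph{Gap convergence (main obstacle).} The key step is proving $U_{T,m}-L_{T,m}\to0$, and I would do it by a compactness argument. Take optimizers $(x^{(m)},w^{(m)})$ of the max and min problems. Their point count $M=m^2+2$ grows with $m$, so I would encode each as the measure $\mu_m=\sum_r w_r\delta_{x_r}$ on the compact set $X_T$, which has total mass $T$. By Prokhorov, a subsequence converges weakly to some $\mu$. For a fixed $\ell$, and any fixed $k$ taken with $m\ge \max(\ell,k)$, the constraint residual bound $(T\Gamma_\ell+1)2^{-k}$ passes to the limit. The bound $\|a_{\ell k}-a_\ell\|\le\Gamma_\ell 2^{-k}$ from \eqref{eq:test-integrand-approximation} then gives $|\int a_\ell\,d\mu+\beta_\ell|\le 2(T\Gamma_\ell+1)2^{-k}$. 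Letting $k\to\infty$ shows that $\mu$ satisfies every identity \eqref{eq:test-polynomial-identities}. Because the polynomials are dense in $C^1$, $\mu$ satisfies the Liouville equation against all $C^1$ test functions. Lemma~\ref{lem:occsingleton}, together with the uniqueness in Proposition~\ref{prop:wellposed}, then identifies $\mu$ with the occupation measure of the true trajectory. Since $\widehat q_{T,m}\to q_T$ uniformly and $q_T$ is continuous, both extremal objective values converge to $\int q_T\,d\mu=A_T$. The rounding errors vanish as well, so the gap tends to zero. This is the role of Lemma~\ref{lem:staticeval} applied with $g=q_T$ and $g_m=\widehat q_{T,m}$; I would invoke it directly and check that its hypotheses hold. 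The delicate points are uniform convergence on a neighborhood of $X_T$ and the continuity of $q_T$, which the appendix construction supplies.

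\emph{Monotone limit.} The construction gives $\ell_T\le A_T\le u_T$ and $u_T-\ell_T<2^{-T}$, so $A_T-2^{-T}<\ell_T$. By \eqref{eq:area-approximation-properties}, the integrand $\mathbf 1_{[0,T]}q_T(z(t))$ is nondecreasing in $T$ and converges pointwise to $h(q(t))$. Monotone convergence therefore gives $A_T\uparrow C$, including the case $C=+\infty$. For the running maximum, the sequence $c_T$ is nondecreasing by definition. Each candidate $\ell_S$ with $S\le T$ satisfies $\ell_S\le A_S\le A_T$, and $0\le A_T$, so $c_T\le A_T\le C$. Combined with $c_T\ge\ell_T>A_T-2^{-T}$, squeezing gives $c_T\uparrow C$.
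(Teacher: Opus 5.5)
Your proposal is correct and takes essentially the same route as the paper: the $m$-loop terminates because $U_{T,m}-L_{T,m}\to0$ by Lemma~\ref{lem:staticeval}, and then monotone convergence $A_T\uparrow C$ together with the squeeze $A_T-2^{-T}<\ell_T\le c_T\le A_T\le C$ gives $c_T\uparrow C$; your Prokhorov argument on the optimizer measures is a sequential rephrasing of the paper's nested weakly compact sets $\mathcal P_{T,m}$ shrinking to $\{\mu_T\}$ via Lemma~\ref{lem:occsingleton}. Your derivation of $Q_T,V_T,r_T$ from the energy bound fills in a step the paper only asserts, and the steps you leave implicit are routine: optimizers exist on the compact feasible set, and subsequential weak convergence upgrades to full convergence because every limit point is $\mu_T$.
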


By Theorem~\ref{thm:lower} and \eqref{eq:cn}, $C<\infty$ exactly when
$\{c_T\}$ is bounded. If $C<\infty$, then for every fixed $\Delta>0$,
\[
0\le\frac{1}{\Delta}\int_t^{t+\Delta}h(q(s))\,\mathrm ds
\le\frac{1}{\Delta}\int_t^\infty h(q(s))\,\mathrm ds\to 0
.
\]
Pointwise convergence of $h(q(t))$ to zero, however, is not guaranteed unless $h(q(t))$ is uniformly continuous, by Barbalat's lemma.


\section{Numerical Demo}
\label{sec:benchmark}

This section shows how the controller behaves on a 60-agent configuration
with two initial holes. The snapshots show the agent motion and the numerical
\v{C}ech complex, while the area plot follows the two holes that are present
at the initial time. 

\begin{figure*}[t]
\centering
\begin{minipage}[b]{0.235\textwidth}
\centering
\footnotesize $t=0.0$~s\\[1pt]
\includegraphics[width=\linewidth]{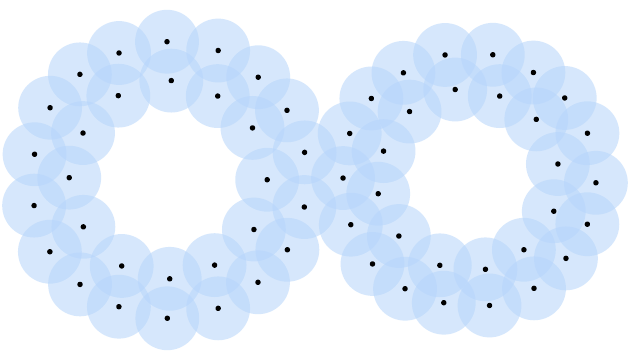}\\[1pt]
\includegraphics[width=\linewidth]{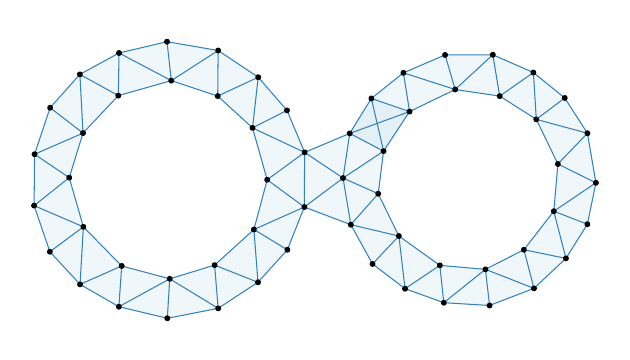}
\end{minipage}\hspace{0.1cm}
\begin{minipage}[b]{0.235\textwidth}
\centering
\footnotesize $t=5.0$~s\\[1pt]
\includegraphics[width=\linewidth]{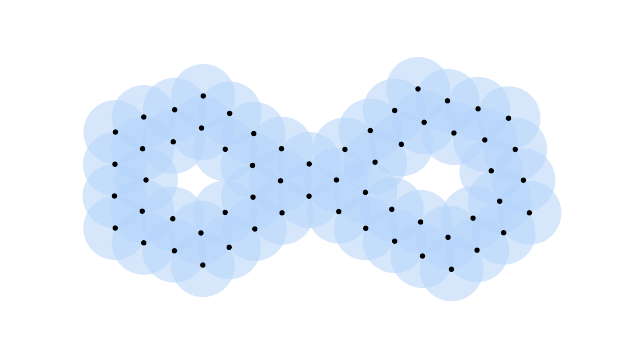}\\[1pt]
\includegraphics[width=\linewidth]{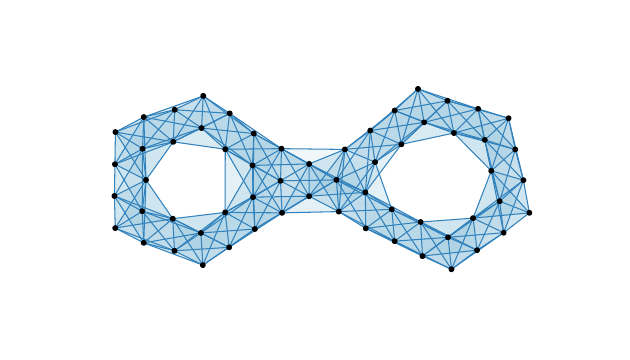}
\end{minipage}\hspace{0.1cm}
\begin{minipage}[b]{0.235\textwidth}
\centering
\footnotesize $t=10.0$~s\\[1pt]
\includegraphics[width=\linewidth]{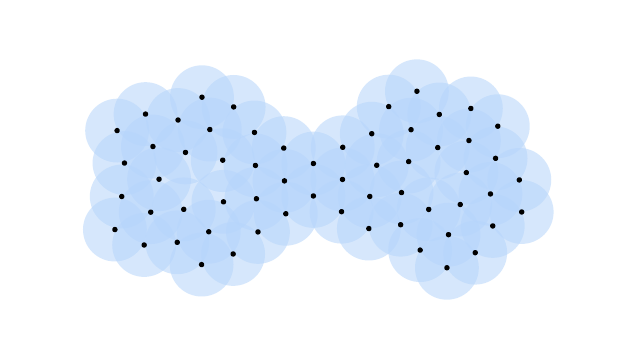}\\[1pt]
\includegraphics[width=\linewidth]{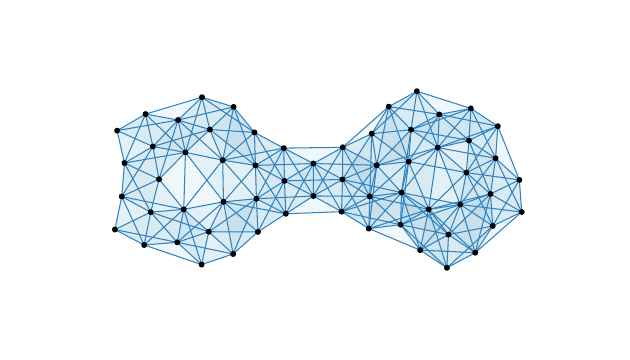}
\end{minipage}\hspace{0.1cm}
\begin{minipage}[b]{0.235\textwidth}
\centering
\footnotesize $t=15.0$~s\\[1pt]
\includegraphics[width=\linewidth]{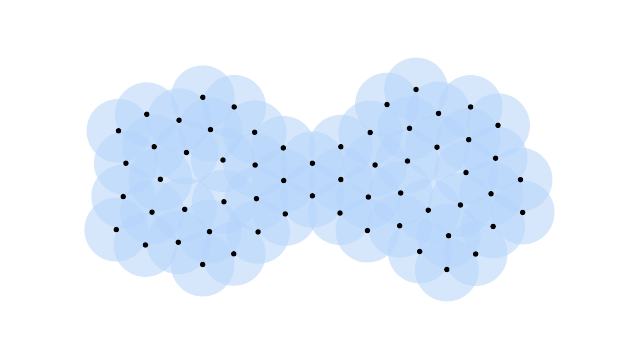}\\[1pt]
\includegraphics[width=\linewidth]{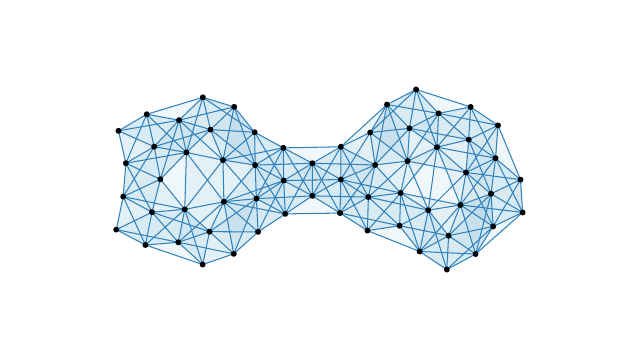}
\end{minipage}
\caption{The two-hole peanut with state-dependent damping. Each column shows the configuration and numerical
\v{C}ech complex from the same accepted trajectory; the terminal saved frame
has $\beta_1=0$.}
\label{fig:snapshots}
\end{figure*}

\begin{figure}[t]
\centering
\includegraphics[width=\columnwidth]{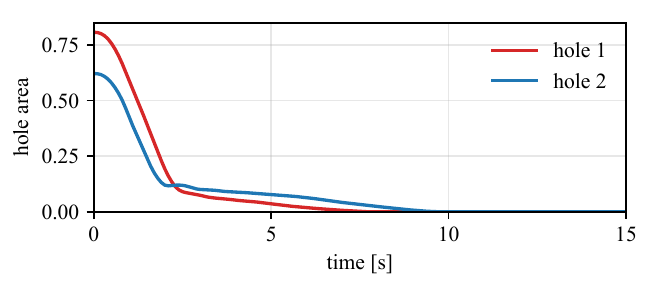}
\caption{Areas (coordinate units squared) of the two
tracked initial holes over the 15~s run, computed using open disks. Both
curves read zero by the end of the run.}
\label{fig:holeareas}
\end{figure}

The initial positions form the two-hole peanut shown on the left of Fig.~\ref{fig:snapshots}. Table~\ref{tab:benchmark-parameters} lists the parameters and integrator settings, where the Chebyshev polynomial degree $d=343$ is determined by the stated polynomial-tail tolerance. The run integrates the controller in Section~\ref{sec:distributed}. The implementation evaluates the same finite-degree local operators with centralized single-process sparse operations. It emulates the algebra of the distributed recurrence but does not measure message delays or execute a networked message-passing protocol.

\begin{table}[t]
\centering
\caption{Parameters for the numerical demo}
\label{tab:benchmark-parameters}
\begin{tabular}{@{}p{0.52\columnwidth}p{0.40\columnwidth}@{}}
\toprule
Setting & Value \\
\midrule
Agents; initial number of holes & $N=60$; 2 \\
Sensing-disk radius & $r_s=0.234$ \\
Edge/triangle regularization width & $2.5\times 10^{-4}$ \\
Local ridge $\lambda_R$; gain $\kappa$; force cap $c_H$
  & $0.05$; $8$; $0.8$ \\
Number of columns in $Z$; random seed & $K=48$; $419$ \\
$\tau$; $d$ & $60$; $343$ \\
Polynomial error tolerance & $5\times 10^{-3}$ \\
Repulsion distances & $a=0.18$; $b=0.24$ \\
Repulsion gain; smoothing & $0.5$; $\tau_U=0.25$ \\
Damping gain & $\alpha=1.2$ \\
Integrator & DOP853 \\
Absolute; relative tolerance & $10^{-9}$; $10^{-6}$ \\
Maximum step & $0.05$ \\
Horizon; saved-frame grid & $15$s; $0.1$s \\
\bottomrule
\end{tabular}
\end{table}

\subsection{Results}

\StateDampingDemoDetails\ 
Fig.~\ref{fig:snapshots} should be read from left to right: the agents move
inward around the two uncovered regions, and
the displayed complex has no holes by the final
saved frame. Fig.~\ref{fig:holeareas} reports the raw areas of the two tracked initial holes. The two tracked curves first read zero respectively at $7.8$ and $9.8$s and remain zero afterwards.

\section{Conclusion and Future Work}

We developed a harmonic-covariance controller that turns harmonic edge information into bounded forces on the agents and admits distributed evaluation. Under the stated assumptions, the regularized closed loop has a unique global trajectory from every safe initial state. For this controlled dynamics, the initial state and model determine an offline sequence of rational numbers $c_T$. Each stage is a finite calculation and does not query trajectory data. The sequence is non-decreasing and converges to the cumulative normalized hole area. When the cost is finite, the average hole area on every fixed-length
window tends to zero as the window moves forward. Yet this average statement implies neither pointwise convergence to zero nor finite-time hole closure. The numerical demo in Section~\ref{sec:benchmark} illustrates the controller architecture, with both initial holes closing in finite time.

Despite the results in this paper, a key difficulty persists: construction of $q_T$ and quantifier elimination in Section~\ref{sec:finite-area-optimization} are still heavy in computation, prohibiting an efficient algorithm that tests whether the holes of a given initial state will close. Apart from addressing this problem, the next steps include weakening assumptions (A2), (A3), and (A4) and testing an actual distributed implementation rather
than the current synchronous emulation.

\section*{Acknowledgments}
This work was supported by the Air Force Office of Scientific Research (AFOSR) under award number FA9550-23-1-0046.

In writing this paper, large language models (LLMs), specifically, the Codex models~\cite{openai2026codex}, were used for polishing human-written text, and for enhancing logical continuity and consistency. LLMs were used for verifying the correctness of the mathematical proofs and for increasing clarity of the writing. LLMs were also used in implementation of the numerical demos, including generation of parts of the code for simulation and visualization.
All edits suggested by AI were manually reviewed by the authors before implementation. All ideas and technical details originated purely from the authors.

\bibliographystyle{IEEEtran}
\bibliography{refs}

@article{reynolds1987flocks,
  author  = {Reynolds, Craig W.},
  title   = {Flocks, Herds and Schools: A Distributed Behavioral Model},
  journal = {ACM SIGGRAPH Computer Graphics},
  volume  = {21},
  number  = {4},
  pages   = {25--34},
  year    = {1987}
}

@article{vicsek1995novel,
  author  = {Vicsek, Tam{\'a}s and Czir{\'o}k, Andr{\'a}s and Ben-Jacob, Eshel and Cohen, Inon and Shochet, Ofer},
  title   = {Novel Type of Phase Transition in a System of Self-Driven Particles},
  journal = {Physical Review Letters},
  volume  = {75},
  number  = {6},
  pages   = {1226--1229},
  year    = {1995}
}

@article{jadbabaie2003coordination,
  author  = {Jadbabaie, Ali and Lin, Jie and Morse, A. Stephen},
  title   = {Coordination of Groups of Mobile Autonomous Agents Using Nearest Neighbor Rules},
  journal = {IEEE Transactions on Automatic Control},
  volume  = {48},
  number  = {6},
  pages   = {988--1001},
  year    = {2003}
}

@article{olfatisaber2006flocking,
  author  = {Olfati-Saber, Reza},
  title   = {Flocking for Multi-Agent Dynamic Systems: Algorithms and Theory},
  journal = {IEEE Transactions on Automatic Control},
  volume  = {51},
  number  = {3},
  pages   = {401--420},
  year    = {2006}
}

@article{cucker2007emergent,
  author  = {Cucker, Felipe and Smale, Steve},
  title   = {Emergent Behavior in Flocks},
  journal = {IEEE Transactions on Automatic Control},
  volume  = {52},
  number  = {5},
  pages   = {852--862},
  year    = {2007}
}

@article{desilva2007coverage,
  author  = {de Silva, Vin and Ghrist, Robert},
  title   = {Coverage in Sensor Networks via Persistent Homology},
  journal = {Algebraic \& Geometric Topology},
  volume  = {7},
  pages   = {339--358},
  year    = {2007}
}

@article{ghrist2008barcodes,
  author  = {Ghrist, Robert},
  title   = {Barcodes: The Persistent Topology of Data},
  journal = {Bulletin of the American Mathematical Society},
  volume  = {45},
  number  = {1},
  pages   = {61--75},
  year    = {2008}
}

@book{edelsbrunner2010computational,
  author    = {Edelsbrunner, Herbert and Harer, John},
  title     = {Computational Topology: An Introduction},
  publisher = {American Mathematical Society},
  year      = {2010}
}

@article{tahbazsalehi2010coverage,
  author  = {Tahbaz-Salehi, Alireza and Jadbabaie, Ali},
  title   = {Distributed Coverage Verification in Sensor Networks Without Location Information},
  journal = {IEEE Transactions on Automatic Control},
  volume  = {55},
  number  = {8},
  pages   = {1837--1849},
  year    = {2010},
  doi     = {10.1109/TAC.2010.2047541}
}

@misc{yadokoro2023weighted,
  author       = {Yadokoro, Shunsaku and Bhattacharya, Subhrajit},
  title        = {Weighted Combinatorial {Laplacian} and its Application to Coverage Repair in Sensor Networks},
  howpublished = {arXiv:2312.04825},
  year         = {2023}
}

@inproceedings{elamvazhuthi2016coverage,
  author    = {Elamvazhuthi, Karthik and Adams, Chase and Berman, Spring},
  title     = {Coverage and Field Estimation on Bounded Domains by Diffusive Swarms},
  booktitle = {Proc. IEEE Conference on Decision and Control (CDC)},
  pages     = {2867--2874},
  year      = {2016},
  doi       = {10.1109/CDC.2016.7798696}
}

@article{rysz2025simplicial,
  author  = {Rysz, Maciej and Pardalos, Panos M. and Mehta, Siddhartha S.},
  title   = {A Simplicial Homology Approach for Assessing and Rectifying Coverage of Sensor Networks for Improved Crop Management},
  journal = {European Journal of Operational Research},
  volume  = {325},
  number  = {1},
  pages   = {204--218},
  year    = {2025},
  doi     = {10.1016/j.ejor.2025.03.010}
}

@inproceedings{derenick2010simplicial,
  author    = {Derenick, Jason and Kumar, Vijay and Jadbabaie, Ali},
  title     = {Towards Simplicial Coverage Repair for Mobile Robot Teams},
  booktitle = {Proc. IEEE International Conference on Robotics and Automation (ICRA)},
  pages     = {5472--5477},
  year      = {2010},
  doi       = {10.1109/ROBOT.2010.5509808}
}

@inproceedings{muhammad2007dynamic,
  author    = {Muhammad, A. and Jadbabaie, A.},
  title     = {Dynamic Coverage Verification in Mobile Sensor Networks via Switched Higher Order {Laplacians}},
  booktitle = {Proc. Robotics: Science and Systems III (RSS)},
  year      = {2007},
  doi       = {10.15607/RSS.2007.III.039}
}

@misc{feng2026poriferous,
  author       = {Feng, Xun and Zhai, Chao},
  title        = {Distributed Coverage Control on Poriferous Surface via Poly-Annulus Conformal Mapping},
  howpublished = {arXiv:2601.13688},
  year         = {2026}
}

@article{yang2022simplicial,
  author  = {Yang, Maosheng and Isufi, Elvin and Schaub, Michael T. and Leus, Geert},
  title   = {Simplicial Convolutional Filters},
  journal = {IEEE Transactions on Signal Processing},
  volume  = {70},
  pages   = {4633--4648},
  year    = {2022},
  doi     = {10.1109/TSP.2022.3207045}
}

@inproceedings{shuman2011chebyshev,
  author    = {Shuman, David I. and Vandergheynst, Pierre and Frossard, Pascal},
  title     = {Chebyshev Polynomial Approximation for Distributed Signal Processing},
  booktitle = {2011 International Conference on Distributed Computing in Sensor Systems and Workshops},
  pages     = {1--8},
  year      = {2011},
  doi       = {10.1109/DCOSS.2011.5982158},
  eprint    = {1105.1891},
  archivePrefix = {arXiv}
}

@article{zhang2020barriers,
  author  = {Zhang, Leiming and Sadler, Brian M. and Blum, Rick S. and Bhattacharya, Subhrajit},
  title   = {Inter-cluster Transmission Control Using Graph Modal Barriers},
  journal = {IEEE Transactions on Signal and Information Processing over Networks},
  volume  = {7},
  pages   = {275--293},
  year    = {2021},
  doi     = {10.1109/TSIPN.2021.3071219}
}

@misc{nistDLMF10352,
  author       = {{NIST Digital Library of Mathematical Functions}},
  title        = {Jacobi--Anger Expansions, Equation 10.35.2},
  howpublished = {\url{https://dlmf.nist.gov/10.35.E2}}
}

@article{schaub2020random,
  author  = {Schaub, Michael T. and Benson, Austin R. and Horn, Paul and Lippner, Gabor and Jadbabaie, Ali},
  title   = {Random Walks on Simplicial Complexes and the Normalized {Hodge} 1-{Laplacian}},
  journal = {SIAM Review},
  volume  = {62},
  number  = {2},
  pages   = {353--391},
  year    = {2020}
}

@article{hansen2021opinion,
  author  = {Hansen, Jakob and Ghrist, Robert},
  title   = {Opinion Dynamics on Discourse Sheaves},
  journal = {SIAM Journal on Applied Mathematics},
  volume  = {81},
  number  = {5},
  pages   = {2033--2060},
  year    = {2021},
  doi     = {10.1137/20M1341088}
}

@inproceedings{hanks2025cellular,
  author    = {Hanks, Tyler and Riess, Hans and Cohen, Samuel and Gross, Trevor and Hale, Matthew and Fairbanks, James},
  title     = {Distributed Multi-Agent Coordination over Cellular Sheaves},
  booktitle = {Proc. IEEE Conference on Decision and Control (CDC)},
  pages     = {2461--2468},
  year      = {2025},
  doi       = {10.1109/CDC57313.2025.11312066}
}

@article{lasserre2008nonlinear,
  author  = {Lasserre, Jean B. and Henrion, Didier and Prieur, Christophe and Tr{\'e}lat, Emmanuel},
  title   = {Nonlinear Optimal Control via Occupation Measures and {LMI}-Relaxations},
  journal = {SIAM Journal on Control and Optimization},
  volume  = {47},
  number  = {4},
  pages   = {1643--1666},
  year    = {2008}
}

@article{henrion2024occupation,
  author  = {Henrion, Didier and Korda, Milan and Kru{\v{z}}{\'i}k, Martin and Rios-Zertuche, Rodolfo},
  title   = {Occupation Measure Relaxations in Variational Problems: The Role of Convexity},
  journal = {SIAM Journal on Optimization},
  volume  = {34},
  number  = {2},
  pages   = {1708--1731},
  year    = {2024},
  doi     = {10.1137/23M1557088}
}

@book{basu2006algorithms,
  author    = {Basu, Saugata and Pollack, Richard and Roy, Marie-Fran{\c{c}}oise},
  title     = {Algorithms in Real Algebraic Geometry},
  edition   = {2nd},
  series    = {Algorithms and Computation in Mathematics},
  volume    = {10},
  publisher = {Springer},
  address   = {Berlin, Heidelberg},
  year      = {2006},
  doi       = {10.1007/3-540-33099-2}
}

@misc{openai2026codex,
  author       = {{OpenAI}},
  title        = {{Codex}},
  howpublished = {AI coding assistant},
  url          = {https://developers.openai.com/codex/},
}

\appendices
\begingroup
\renewcommand{\baselinestretch}{0.77}\selectfont
\setlength{\parskip}{1pt}
\makeatletter
\renewenvironment{proof}[1][\proofname]{\par
  \pushQED{\qed}%
  \normalfont \topsep\z@\partopsep\z@\relax
  \trivlist
  \item[\hskip\labelsep\itshape #1\@addpunct{.}]\ignorespaces
}{%
  \popQED\endtrivlist\@endpefalse
}
\makeatother
\section{Constructions}
\label{app:constructions}

\theoremstyle{appendixplain}
\newtheorem{auxlemma}{Lemma}
\renewcommand{\theauxlemma}{\Alph{auxlemma}}


\subsection{Construction of $F_k$}
\label{app:field-construction}

Fix $T$ and $k$, and let $d_z=4N$. Choose a rational $0<\gamma_T<r_T-a$. On a rational grid, retain every cube that meets $K_T$ and refine until its diameter is below $\gamma_T/(2\sqrt{2})$. These cubes lie in the safe set, and a consistent subdivision gives a compact rational polyhedral neighborhood $K_T\subset P_T\subset\Omega$ with a finite conforming rational triangulation $\mathcal T_k$. At each distinct vertex $\zeta$ of $\mathcal T_k$, evaluate the known formula
$F(\zeta)$ directly and store one shared rational value
$c_{\zeta,k}\in\mathbb Q^{d_z}$ satisfying
$\lVert c_{\zeta,k}-F(\zeta)\rVert_2\le\eta_k$.
For a simplex $\sigma=\operatorname{conv}\{\zeta_0,\ldots,\zeta_{d_z}\}$,
let $\lambda_{\sigma,j}$ be its rational barycentric coordinates and define
\begin{equation}
F_k(z)=\sum_{j=0}^{d_z}\lambda_{\sigma,j}(z)c_{\zeta_j,k},
\;\; z\in\sigma.
\label{eq:field-interpolant}
\end{equation}

The same model data and positive safety margin give a rational Lipschitz upper
bound $L_T\ge1$ for $F$ on the retained simplices. With $h_k$ denoting the
maximum simplex diameter, refine until
\begin{equation}
h_k\le 2^{-k-1} / L_T,
\;\; \eta_k=2^{-k-1}.
\label{eq:field-mesh-tolerances}
\end{equation}
Finite refinement achieves these bounds. Shared nodal values make $F_k$
continuous across simplex faces. For each simplex $\sigma$ and
$z\in\sigma\cap K_T$, barycentric nonnegativity and
\eqref{eq:field-mesh-tolerances} give
$\lVert F_k(z)-F(z)\rVert_2\le\eta_k+L_Th_k\le2^{-k}$.
Taking the supremum over $K_T$ proves
Proposition~\ref{prop:field-approximation}.

\subsection{Construction of $q_T$}
\phantomsection
\label{par:area-lower-approximation}

We construct $q_T$ satisfying \eqref{eq:area-approximation-properties}. Start with a fixed graph $\mathcal E$ on $[N]$ and draw each edge as the
straight segment between its endpoints. Let $\mathcal W_{\mathcal E}(q)$ be
the set of points about which some polygonal cycle in $\mathcal E$ has
nonzero winding number. The graph segments themselves have zero area. The
normalized area enclosed by the graph but left uncovered by the sensing disks
is
\begin{equation}
h_{\mathcal E}(q)
=|\mathcal W_{\mathcal E}(q)\setminus A(q)|/(8Nr_s)^2.
\label{eq:graph-hole-readout}
\end{equation}
Take two configurations $q,\widetilde q$ of the same labeled nodes, ordered
so that $\lVert\widetilde q\rVert_\infty\le\lVert q\rVert_\infty$.
Set $Q=\lVert q\rVert_\infty$ and
$\delta=\max_i\lVert q_i-\widetilde q_i\rVert$, so both configurations
lie in $[-Q,Q]^{2N}$. Take the translation direction of one disk as
horizontal. At each height, its left and right boundary points each move by
at most $\delta$. These give an area bound
$(2r_s)(2\delta)=4r_s\delta$. Moving all $N$ disks therefore changes
coverage over at most $4r_sN\delta$ of area.

Next set $q(s)=(1-s)q+s\widetilde q$ for $0\le s\le1$. The graph
$\mathcal E$ is fixed throughout the position box, including configurations
where its edges exceed the sensing range $2r_s$. Each edge is bounded by the
diagonal of $[-Q,Q]^2$, namely $2\sqrt{2}Q\le3Q$, and every point on it has
speed at most $\delta$. Thus each edge sweeps area at most
$\int_0^1 3Q\delta\,\mathrm ds=3Q\delta$. Cycle winding numbers are
unchanged outside the swept regions, so the graph-enclosed region changes
over at most $3Q|\mathcal E|\delta$ of area. Adding the two area bounds and
dividing by $(8Nr_s)^2$ gives
\begin{equation}
|h_{\mathcal E}(q)-h_{\mathcal E}(\widetilde q)|
\le L_{\mathcal E,Q}\delta,\;\;
L_{\mathcal E,Q}=\frac{4r_sN+3Q|\mathcal E|}{(8Nr_s)^2}.
\label{eq:graph-readout-lipschitz}
\end{equation}

It remains to approach the sensing graph without making a discontinuous edge
decision. For every complete-graph edge $e=\{i,j\}$, define
$b_e(q)=4r_s^2-\lVert q_i-q_j\rVert^2$ and set
\begin{equation}
p_{e,T}(q)=\min\left(1,\max(0,Tb_e(q))\right).
\label{eq:edge-probability}
\end{equation}
The number $p_{e,T}$ acts as the inclusion weight of edge $e$; its exclusion
weight is $1-p_{e,T}$. Averaging over the finitely many graphs gives
\begin{equation}
q_T(q)=\sum_{\mathcal E}
\left[\prod_{e\in\mathcal E}p_{e,T}(q)\right]
\left[\prod_{e\notin\mathcal E}(1-p_{e,T}(q))\right]
h_{\mathcal E}(q).
\label{eq:area-lower-envelope}
\end{equation}
For the sensing graph $\mathcal E(q)$, a point outside $A(q)$ is encircled
by a cycle exactly when it lies in a hole, so
$h_{\mathcal E(q)}(q)=h(q)$. Since $h_{\mathcal E}$ is nondecreasing under
edge addition, the convex combination in
\eqref{eq:area-lower-envelope}, supported on sensing subgraphs, gives
$0\le q_T\le h$. Continuity follows from
\eqref{eq:graph-readout-lipschitz} and \eqref{eq:edge-probability}.

For fixed $q$, since $p_{e,T}(q)$ is non-decreasing with $T$, $q_T$ is also non-decreasing. Every nonsensing edge has zero weight. Once $Tb_e(q)\ge1$ for every sensing edge, \eqref{eq:area-lower-envelope} reduces to $h_{\mathcal E(q)}(q)=h(q)$. Hence $q_T(q)\uparrow h(q)$, proving \eqref{eq:area-approximation-properties}.

\subsection{Construction of $\widehat q_{T,m}$}
\label{app:objective-construction}

Fix $T$ and $m$. Choose a rational $Q_+>0$ so that $P=[-Q_+,Q_+]^{2N}$ contains all $q$-components of $K_T$, and take a rational Lipschitz upper bound $\Lambda_T\ge1$ for $q_T$ on $P$; its
construction is given below. Take a  rational triangulation
such that any two configurations $q,\widetilde q$ in the same simplex satisfy $\max_i\|q_i-\widetilde q_i\|_2\le\rho_m \coloneqq 2^{-m-1}/\Lambda_T.$

At each shared vertex $q^\nu$, compute $y_\nu\in\mathbb Q$ satisfying
$|y_\nu-q_T(q^\nu)|\le2^{-m-1}$. 
For each graph, divide a box containing the region in
\eqref{eq:graph-hole-readout} into small squares.
Discard squares verified to lie entirely outside the region.
The total area of squares verified to lie entirely inside is a
lower bound; adding the area of every square that intersects the boundary gives an upper bound. Normalize these bounds and evaluate \eqref{eq:area-lower-envelope} using rational input bounds. Refine the grid until the resulting gap between upper and lower bounds falls below $2^{-m}$, and take its midpoint as $y_\nu$. For $q\in P$, let $\lambda_\nu(q)\ge0$ be its barycentric coordinate with respect to the vertices $q^\nu$ of the containing simplex and define the piecewise affine approximation
\[
\widehat q_{T,m}(q)=\sum_\nu\lambda_\nu(q)y_\nu.
\]
Then it follows that
\[
|\widehat q_{T,m}(q)-q_T(q)|
\le\sum_\nu\lambda_\nu(q)
\left(2^{-m-1}+\Lambda_T\rho_m\right)=2^{-m}.
\]
Thus this construction satisfies \eqref{eq:objective-approximation}. To choose $\Lambda_T$, note that every graph-enclosed region lies in
$[-Q_+,Q_+]^2$, so $0\le h_{\mathcal E}\le H$ for a rational
$H\ge4Q_+^2/(8Nr_s)^2$. For $q,\widetilde q\in P$, set
$\delta=\max_i\lVert q_i-\widetilde q_i\rVert$. The edge lengths are at
most $3Q_+$, and
\[
\begin{aligned}
|p_{e,T}(q)&-p_{e,T}(\widetilde q)|
\le T\left|\lVert q_i-q_j\rVert^2
-\lVert\widetilde q_i-\widetilde q_j\rVert^2\right|\\
&\le T(3Q_++3Q_+)(\delta+\delta)=12TQ_+\delta.
\end{aligned}
\]
\eqref{eq:area-lower-envelope} then gives
\[
\begin{aligned}
&|q_T(q)-q_T(\widetilde q)|
\\
&\le\max_{\mathcal E}|h_{\mathcal E}(q)-h_{\mathcal E}(\widetilde q)|+H\sum_e|p_{e,T}(q)-p_{e,T}(\widetilde q)|\\
&\le\left(\max_{\mathcal E}L_{\mathcal E,Q_+}
+12TQ_+H\binom{N}{2}\right)\delta.
\end{aligned}
\]
Any rational $\Lambda_T\ge1$ not smaller than this coefficient suffices.

\section{Proofs}
\label{app:proofs}

\begin{proof}[Proof of Lemma~\ref{lem:properties}]
Summing the pair forces gives
\begin{align*}
\sum_i(q_i-\overline{q})^{\top}F_i^{H}
&=\lambda\sum_e\left((q_i-\overline{q})^{\top}F_i^{e}
+(q_j-\overline{q})^{\top}F_j^{e}\right)\\
&=-\lambda\kappa\sum_e a_e\,s_e\,\|q_j-q_i\|^2\le0.
\end{align*}
The last inequality uses $\lambda,\kappa,a_e,s_e\ge0$.
\end{proof}

\begin{auxlemma}
\label{lem:occsingleton}
Fix $T\ge1$ and use $K_T$ and $X_T$ defined in Section~\ref{sec:initial-state-construction}.  Among the finite
nonnegative measures on $X_T$, the equations
\[
\int_{X_T}\left(\partial_t\psi+F^{\top}\nabla_z\psi\right)\mathrm d\mu
=-\psi(0,z_0)
\]
\hspace*{1pt}for every $C^1$ function $\psi$ with $\psi(T,\cdot)=0$ have the
single solution $\mu_T=\mathrm dt\otimes\delta_{z(t)}$ with $\mu_T(X_T)=T$.
\end{auxlemma}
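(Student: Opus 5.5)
The plan is to show two things: the occupation measure $\mu_T=\mathrm dt\otimes\delta_{z(t)}$ is a solution, and any finite nonnegative solution $\mu$ on $X_T$ must agree with $\mu_T$ on a class of test integrands $g$ rich enough to determine a measure. Here $\mu_T$ is the pushforward of Lebesgue measure on $[0,T]$ under $t\mapsto(t,z(t))$. The second step is a duality argument: I will solve a backward transport equation $\partial_t\psi+F^\top\nabla_z\psi=g$ with $\psi(T,\cdot)=0$ and insert this $\psi$ as a test function.

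\emph{Existence and mass.} Proposition~\ref{prop:wellposed} gives a unique trajectory, and by construction of $K_T$ it stays in $K_T$ for $t\in[0,T]$. So $\mu_T$ is a finite nonnegative measure on $X_T$. For any admissible $C^1$ function $\psi$, the chain rule and the fundamental theorem of calculus (exactly as in \eqref{eq:occupation-liouville}) give $\int(\partial_t\psi+F^\top\nabla_z\psi)\,\mathrm d\mu_T=-\psi(0,z_0)$. To obtain the mass of an arbitrary solution $\mu$, take $\psi(t,z)=t-T$. Then $\partial_t\psi=1$ and $\nabla_z\psi=0$, so $\mu(X_T)=-\psi(0,z_0)=T$.

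\emph{Uniqueness by duality.} Fix a $C^1$ function $g$ on a neighborhood of $X_T$.
\begin{itemize}
\item Since $F$ is only locally Lipschitz (Proposition~\ref{prop:wellposed}), $\psi$ cannot be built directly from characteristics of $F$ and still be guaranteed $C^1$. I will therefore extend $F|_{P}$, for a compact neighborhood $P\subset\Omega$ of $K_T$, to a globally Lipschitz and bounded field on $\mathbb R^{4N}$, using a McShane-type extension followed by a cutoff. Mollifying this extension gives smooth fields $F_\varepsilon$ with $F_\varepsilon\to F$ uniformly on $K_T$ and Lipschitz constants bounded by some $L$ independent of $\varepsilon$.
\item Let $\Phi^\varepsilon_{s,t}$ denote the flow of $F_\varepsilon$ from time $t$ to time $s$. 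Define
\[
\psi_\varepsilon(t,z)=-\int_t^T g\bigl(s,\Phi^\varepsilon_{s,t}(z)\bigr)\,\mathrm ds .
\]
This function is $C^1$, satisfies $\psi_\varepsilon(T,\cdot)=0$, and solves $\partial_t\psi_\varepsilon+F_\varepsilon^\top\nabla_z\psi_\varepsilon=g$.
\item Substituting $\psi_\varepsilon$ into the defining equation for $\mu$ gives
\[
\int g\,\mathrm d\mu+\int (F-F_\varepsilon)^\top\nabla_z\psi_\varepsilon\,\mathrm d\mu=\int_0^T g\bigl(s,\Phi^\varepsilon_{s,0}(z_0)\bigr)\,\mathrm ds .
\]
\end{itemize}

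\emph{Main obstacle: a gradient bound uniform in $\varepsilon$.} Differentiating the flow and applying Gronwall gives $\|D\Phi^\varepsilon_{s,t}\|\le e^{L(s-t)}$, hence $\|\nabla_z\psi_\varepsilon\|_\infty\le T\,\|\nabla_z g\|_\infty e^{LT}$ for every $\varepsilon$. Together with $\mu(X_T)=T$ and $\|F-F_\varepsilon\|_{\infty,K_T}\to0$, the error term on the left vanishes as $\varepsilon\to0$.

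On the right, the extended field equals $F$ along the true trajectory, because the trajectory stays in $K_T\subset P$. A second Gronwall estimate then gives $\Phi^\varepsilon_{s,0}(z_0)\to z(s)$ uniformly on $[0,T]$. Passing to the limit yields
\[
\int g\,\mathrm d\mu=\int_0^T g\bigl(s,z(s)\bigr)\,\mathrm ds=\int g\,\mathrm d\mu_T
\]
for every $C^1$ function $g$. Such functions are dense in $C(X_T)$, so $\mu=\mu_T$ by the Riesz representation theorem.
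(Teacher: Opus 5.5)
Your argument is correct, but it takes a different route from the paper. The paper solves no transport equation. It first inserts $\psi(t,z)=\int_t^T\zeta(s)\,\mathrm ds$ to show that the time marginal of any solution $\mu$ is $\mathrm dt$, which in particular gives $\mu(X_T)=T$. It then uses a single trajectory-dependent test function,
\[
\Psi(t,x)=(T-t)e^{-2Lt}\|x-z(t)\|^2 ,
\]
where $L$ is a Lipschitz constant of $F$ on $K_T$. Since $\dot z=F(z)$, the Lipschitz bound gives $\partial_t\Psi+F^\top\nabla_x\Psi\le -e^{-2Lt}\|x-z(t)\|^2$, while the right-hand side $-\Psi(0,z_0)$ is zero. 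Hence $\mu$ is concentrated on the graph of $z$, and together with the time marginal this forces $\mu=\mathrm dt\otimes\delta_{z(t)}$. This is a support (Lyapunov-type) argument that uses $F$ only on $K_T$; it needs no extension, no mollification, and no differentiability of flows.

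Your duality argument instead identifies $\int g\,\mathrm d\mu$ directly for every $C^1$ observable $g$. You therefore need only the total mass from $\psi=t-T$, not the full marginal. The same computation also bounds the error in an observable by the residual of an approximately satisfied identity. That is the setting of Lemma~\ref{lem:staticeval}, so your route extends more naturally to quantitative statements. The price is heavier machinery: McShane extension, mollification, and Gronwall for $D\Phi^\varepsilon$.

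There is one bookkeeping point you should make explicit. The flow of $F_\varepsilon$ started in $K_T$ can leave any fixed neighborhood of $K_T$ before time $T$, so $g\bigl(s,\Phi^\varepsilon_{s,t}(z)\bigr)$ need not be defined for $g$ given only near $X_T$. Take $g$ globally $C^1$, for example a polynomial, which still suffices for density in $C(X_T)$. Then read $\|\nabla_z g\|_\infty$ as a supremum over the set reachable from $K_T$ within time $T$. That set is bounded uniformly in $\varepsilon$ because your cut-off extensions, and hence the $F_\varepsilon$, are uniformly bounded.
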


\begin{proof}[Proof of Lemma~\ref{lem:occsingleton}]
By definition, for $g\in C(X_T)$, one has that $\int_{X_T}g\mathrm d\mu_T=\int_0^T g(t,z)\mathrm dt$. Taking $g=\partial_t\psi+F^\top\nabla_z\psi$ and applying
\eqref{eq:occupation-liouville} gives the stated equations. Conversely, for any solution $\mu$, taking $\psi(t,z)=\int_t^T\zeta(s)\,\mathrm ds$ gives $ \int_{X_T}\zeta(t)\,\mathrm d\mu =\int_0^T\zeta(t)\,\mathrm dt$, with $\zeta\in C([0,T])$. Hence $\pi_t\mu=\mathrm dt$ and $\mu(X_T)=T$. Meanwhile, local Lipschitzness of $F$ gives a constant $L$ such that $\lVert F(x)-F(y)\rVert\le L\lVert x-y\rVert$ for $x,y\in K_T$. Let $z(t)$ be the trajectory under $F$ and take $\Psi(t,x)=(T-t)e^{-2Lt}\left\lVert x-z(t)\right\rVert^2$. Then $\Psi\in C^1$ and $\Psi(T,\cdot) = 0$. Since $\dot z=F(z)$, Cauchy--Schwarz and the Lipschitz bound give
\[ \partial_t\Psi+F(x)^{\top}\nabla_x\Psi
\le-e^{-2Lt}\left\lVert x-z(t)\right\rVert^2.\]
The integral identity for $\Psi$ has zero right-hand side.  Its nonpositive integrand
must therefore vanish $\mu$-almost everywhere. Hence
$\lVert x-z(t)\rVert=0$ almost everywhere. This
finally gives $\mu=\mathrm dt\otimes\delta_{z(t)}$.
\end{proof}

\begin{auxlemma}
\label{lem:staticeval}
Fix a positive integer $T$ and a bounded continuous function
$g:X_T\to\mathbb R$, with uniform rational
approximations on $X_T$ with any prescribed error bound and corresponding trajectory integral in Lemma~\ref{lem:occsingleton}. For every rational $\varepsilon>0$, without querying the trajectory, a terminating
finite calculation involving the model, the
initial state $z_0$, the compact set $K_T$, and these approximation data returns rational numbers $L\le U$ with
\[
U-L<\varepsilon,\;\;
L\le\int_{X_T}g\,\mathrm d\mu_T\le U.
\]
\end{auxlemma}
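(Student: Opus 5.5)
The plan is to realize the integral $\int_{X_T}g\,\mathrm d\mu_T$ as the common limit of the minimum and maximum of finite moment problems of the form \eqref{eq:finite-static-constraints}, with $\widehat q_{T,m}$ replaced by a rational piecewise-polynomial approximation $g_m$ satisfying $\lVert g_m-g\rVert_{\infty,X_T}\le 2^{-m}$. At stage $m$ we use the constraints indexed by $1\le\ell,k\le m$ built from $a_{\ell k}$ and $\beta_{\ell k}$ of Section~\ref{sec:approximating-field}. The extrema are enclosed by bisection with real quantifier elimination, exactly as described in Section~\ref{sec:finite-area-optimization}. The output is rational $L_m\le\int g\,\mathrm d\mu_T\le U_m$, and the loop stops at the first $m$ with $U_m-L_m<\varepsilon$. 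The whole proof then reduces to three claims: soundness of the bracket, decidability of each stage, and $U_m-L_m\to0$.

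\emph{Soundness.} By Lemma~\ref{lem:occsingleton}, $\mu_T$ satisfies the exact identities, so it satisfies the relaxed residual bounds \eqref{eq:main-test-residual}. Next consider the finitely many continuous functions $1$, $g_m$ and $a_{\ell k}$ for $\ell,k\le m$. Their vector of integrals against the probability measure $\mu_T/T$ lies in the convex hull of the image of the compact set $X_T$ under the vector-valued map formed by these functions; note that $X_T$ is compact and the map is continuous. Carath\'eodory's theorem in dimension $m^2+1$ then yields $M=m^2+2$ points with nonnegative weights of total mass $T$ that reproduce all these integrals. This atomic measure is feasible for the stage-$m$ problem, so $j^-_m\le\int g_m\,\mathrm d\mu_T\le j^+_m$. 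Adding $T2^{-m}$ for the error $g-g_m$ and rounding outward gives the bracket. The problem is semialgebraic: $F_k$ is piecewise affine with rational data by Proposition~\ref{prop:field-approximation}, $\psi_\ell$ are rational polynomials, $g_m$ is piecewise polynomial, and $X_T$ is a rational semialgebraic set. Hence each threshold query is a first-order sentence over the reals with rational parameters, and quantifier elimination \cite[Ch.~14]{basu2006algorithms} decides it in finitely many steps.

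\emph{Convergence of the gap.} This is the main obstacle. Pick a feasible atomic measure $\nu_m$ that is within $2^{-m}$ of the stage-$m$ maximum; the minimum is handled symmetrically. All $\nu_m$ have mass $T$ on the compact set $X_T$, so by Prokhorov (Banach--Alaoglu) every subsequence has a further weak-$*$ convergent subsequence $\nu_m\rightharpoonup\nu$. Fix $\ell$ and $k$. For all $m\ge\max(\ell,k)$ the constraint $|\int a_{\ell k}\,\mathrm d\nu_m+\beta_{\ell k}|\le(T\Gamma_\ell+1)2^{-k}$ holds, and $a_{\ell k}$ is continuous, so the same bound holds for $\nu$. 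Using $\lVert a_{\ell k}-a_\ell\rVert_\infty\le\Gamma_\ell2^{-k}$ and $|\beta_{\ell k}-\beta_\ell|\le2^{-k}$ and letting $k\to\infty$ gives $\int a_\ell\,\mathrm d\nu=-\beta_\ell$ for every $\ell$.

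To invoke Lemma~\ref{lem:occsingleton}, we must extend these identities to every $C^1$ function $\psi$ with $\psi(T,\cdot)=0$. We first approximate $\psi$ in the $C^1$ norm on a neighborhood of $X_T$ by rational polynomials $p_\ell$; this uses the $C^1$ density of polynomials on compact sets, for instance by integrating a Weierstrass approximation of the derivatives or by Nachbin's theorem. Then $\psi_\ell=p_\ell-p_\ell(T,\cdot)$ is still $C^1$-close to $\psi$, precisely because $\psi(T,\cdot)=0$. Since $F$ is bounded on $K_T$, both $\int(\partial_t\psi_\ell+F^\top\nabla_z\psi_\ell)\,\mathrm d\nu$ and $\psi_\ell(0,z_0)$ converge to the corresponding quantities for $\psi$. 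Lemma~\ref{lem:occsingleton}, which rests on the uniqueness in Proposition~\ref{prop:wellposed}, then forces $\nu=\mu_T$.

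Finally, $|\int g_m\,\mathrm d\nu_m-\int g\,\mathrm d\nu_m|\le T2^{-m}$, so $\int g_m\,\mathrm d\nu_m\to\int g\,\mathrm d\mu_T$ along the subsequence. Since the subsequence was arbitrary, $j^+_m\to\int g\,\mathrm d\mu_T$, and likewise $j^-_m\to\int g\,\mathrm d\mu_T$. With bisection tolerances and rounding of order $2^{-m}$, this gives $U_m-L_m\to0$, so the loop terminates. At no point do we query $z(t)$: the trajectory enters only through the proof that the true integral lies in the bracket.
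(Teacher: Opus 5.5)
Your proposal is correct and takes essentially the same route as the paper. It uses the same ingredients: the relaxed constraint sets indexed by $1\le\ell,k\le m$, a Carath\'eodory reduction to $M=m^2+2$ atoms, weak compactness of mass-$T$ measures on $X_T$ together with $C^1$-density of the $\psi_\ell$ and Lemma~\ref{lem:occsingleton} to force every limit to equal $\mu_T$, the $T2^{-m}$ correction for $g_m$, and bisection by real quantifier elimination. Your extraction of weak-$*$ limits of near-optimal atomic measures is just the unpacked proof of the paper's nested-compact-sets claim \eqref{eq:nested-observable-convergence}; it needs Carath\'eodory only for soundness rather than to equate the finite and measure-level extrema, and it spells out the density and semialgebraicity steps that the paper only asserts.
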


\begin{proof}[Proof of Lemma~\ref{lem:staticeval}]

Use the terminal-zero functions $\psi_\ell$ defined in
Section~\ref{sec:initial-state-construction}. They are dense in the required
$C^1$ class. For each $k$, Proposition~\ref{prop:field-approximation} supplies the field
$F_k$ used in $a_{\ell k}$, while $\Gamma_\ell$ and $\beta_{\ell k}$ are chosen as
in Section~\ref{sec:initial-state-construction}.
Equation~\eqref{eq:main-test-residual} supplies the tolerance in the finite
constraints. Let $\mathcal P_{T,m}$ be the set of
nonnegative measures $\mu$ on $X_T$ satisfying $\mu(X_T)=T$ and
\begin{gather*}
\left|\int_{X_T}a_{\ell k}\,\mathrm d\mu+\beta_{\ell k}\right|
\le\left(T\Gamma_\ell+1\right)2^{-k},\;\; 1\le\ell,k\le m.
\end{gather*}
The true measure belongs to every $\mathcal P_{T,m}$ by
\eqref{eq:main-test-residual}. These sets are nested. They are also
weakly compact because $X_T$ is compact, every measure is nonnegative with
the same finite mass $T$, and the defining inequalities are weakly closed.
For each fixed $\ell$, letting $k\to\infty$ recovers its exact integral identity; density
of the $\psi_\ell$ family and Lemma~\ref{lem:occsingleton} therefore give
$\bigcap_{m\ge1}\mathcal P_{T,m}=\{\mu_T\}$.
Thus
\begin{equation}
\sup_{\mu\in\mathcal P_{T,m}}
\left|\int_{X_T}g\,\mathrm d\mu-
\int_{X_T}g\,\mathrm d\mu_T\right|\to 0.
\label{eq:nested-observable-convergence}
\end{equation}
We now make a fixed stage finite. Choose a continuous rational
piecewise-polynomial $g_m$ on $X_T$ with
$\lVert g_m-g\rVert_{\infty,X_T}\le2^{-m}$, and define
$H_m(x)=(\left(a_{\ell k}(x)\right)_{1\le\ell,k\le m},g_m(x))
\in\mathbb R^{m^2+1}$.
For every $\mu\in\mathcal P_{T,m}$, the normalized integral of $H_m$ lies in
$\operatorname{conv}H_m(X_T)$.  Carath\'eodory's theorem in convex geometry therefore gives
points $x_r\in X_T$, coefficients $\lambda_r\ge0$ with
$\sum_r\lambda_r=1$, and at most $m^2+2$ terms such that
\[
\frac{1}{T}\int_{X_T}H_m\,\mathrm d\mu
=\sum_{r=1}^{M}\lambda_rH_m(x_r).
\]
Set $w_r=T\lambda_r$, padding to $M=m^2+2$ with zero weights.
The extrema $j_m^\pm$ of $\sum_rw_rg_m(x_r)$ under
\eqref{eq:finite-static-constraints} equal those of
$\int g_m\,\mathrm d\mu$ over $\mathcal P_{T,m}$.
Since $\mu(X_T)=T$,
$|\int(g_m-g)\,\mathrm d\mu|\le T2^{-m}$.
Together with \eqref{eq:nested-observable-convergence}, this gives
$j_m^\pm\to\int g\,\mathrm d\mu_T$ and
\begin{equation}
j_m^- -T2^{-m}\le\int_{X_T}g\,\mathrm d\mu_T
\le j_m^+ +T2^{-m}.
\label{eq:finite-static-outward-enclosure}
\end{equation}
The procedure in Section~\ref{sec:finite-area-optimization} converts
\eqref{eq:finite-static-outward-enclosure} into rational bounds with vanishing
gap, so refinement terminates.
\end{proof}

\endgroup

\end{NoHyper}
\end{document}